\setlist{leftmargin=10mm}
\newif\iffinal
    \newcommand{\tianhao}[1]{}
    \newcommand{\ruoxi}[1]{}
    \newcommand{\dawn}[1]{}
    \newcommand{\add}[1]{#1}
    \newcommand{\rebuttal}[1]{#1}
    \newcommand{\tianhao}[1]{{\bf \textcolor{purple}{[Tianhao: #1]}}}
    \newcommand{\ruoxi}[1]{{\bf \textcolor{BrickRed}{[Ruoxi:#1]}}}
    \newcommand{\dawn}[1]{{\bf \textcolor{blue}{[Dawn:#1]}}}
    \newcommand{\add}[1]{#1}
    \newcommand{\rebuttal}[1]{\textcolor{blue}{#1}}
\newtheorem{theorem}{Theorem}
\newtheorem{definition}[theorem]{Definition}
\newtheorem{remark}{Remark}
\newtheorem{remark-star}{Remark}
\newtheorem{remark-star-1}{Remark}
\newcommand{\R}{\mathbb{R}}
\newcommand{\g}{\nabla}
\newcommand{\A}{\mathcal{A}}
\newcommand{\U}{U}
\newcommand{\test}{\mathrm{val}}
\newcommand{\metric}{\texttt{Perf}}
\newcommand{\trainset}{\mathcal{D}_{\text{tr}}}
\newcommand{\tp}{\intercal}
\newcommand{\Dval}{ D^{(\test)} }
\newcommand{\zval}{ z^{(\test)} }
\newcommand*\circled[1]{\tikz[baseline=(char.base)]{
            \node[shape=circle,draw,inner sep=1pt] (char) {#1};}}
\newcommand{\wtil}{\widetilde{w}}
\newcommand{\hessian}{\textbf{H}^{(\zval)}}
\newcommand{\Uone}{\U^{(t)}_{(1)}}
\newcommand{\Utwo}{\U^{(t)}_{(2)}}
\newcommand{\Ut}{U^{(t)}}
\newcommand{\batch}{\mathcal{B}_t}
\newcommand{\del}{\partial}
\newcommand{\bs}{\textbf{s}}
\newcommand{\ba}{\textbf{a}}
\newcommand{\bb}{\textbf{b}}
\newcommand{\bW}{\textbf{W}}
\newcommand{\baone}{\textbf{a}^{(1)}}
\newcommand{\batwo}{\textbf{a}^{(2)}}
\newcommand{\bbone}{\textbf{b}^{(1)}}
\newcommand{\bbtwo}{\textbf{b}^{(2)}}
\newcommand{\baval}{\textbf{a}^{(\zval)}}
\newcommand{\bbval}{\textbf{b}^{(\zval)}}
\newcommand{\bc}{\textbf{c}}
\newcommand{\bd}{\textbf{d}}
\newcommand{\ellval}{\ell^{(\zval)}}
\newcommand{\elltwo}{\ell^{(2)}}
\newcommand{\bsval}{\bs^{(\zval)}}
\newcommand{\outerprod}{\otimes}
\newcommand{\annotate}[1]{\colorbox{red!10}{\textcolor{red}{$\blacktriangleright$\it [#1]}}}
\definecolor{BrickRed}{RGB}{203,65,84}
\definecolor{darkred}{rgb}{0.55, 0.0, 0.0}
\definecolor{darkblue}{rgb}{0.0, 0.0, 0.55}
\definecolor{grey}{rgb}{0.5, 0.5, 0.5}
\title{Data Shapley in One Training Run}
\author{
Jiachen T. Wang \\
Princeton University \\
\And
Prateek Mittal \\
Princeton University \\
\And
Dawn Song \\
UC Berkeley \\
\And 
Ruoxi Jia \\
Virginia Tech \\
}
\begin{document}

\maketitle

\begin{abstract}
Data Shapley offers a principled framework for attributing the contribution of data within machine learning contexts. However, the traditional notion of Data Shapley requires re-training models on various data subsets, which becomes computationally infeasible for large-scale models. Additionally, this retraining-based definition cannot evaluate the contribution of data for a specific model training run, which may often be of interest in practice. This paper introduces a novel concept, \emph{In-Run Data Shapley}, which eliminates the need for model retraining and is specifically designed for assessing data contribution for a particular model of interest. 
In-Run Data Shapley calculates the Shapley value for each gradient update iteration and accumulates these values throughout the training process. 
We present several techniques that allow the efficient scaling of In-Run Data Shapley to the size of foundation models. In its most optimized implementation, our method adds negligible runtime overhead compared to standard model training. This dramatic efficiency improvement makes it possible to perform data attribution for the foundation model pretraining stage. We present several case studies that offer fresh insights into pretraining data's contribution and discuss their implications for copyright in generative AI and pretraining data curation.
\end{abstract}

\vspace{-1mm}
\section{Introduction}

\vspace{-1mm}
In today's data-driven world, understanding the contribution of each data point is crucial, especially with the advent of foundation models that rely on vast amounts of training data from various sources. 
The lack of reliable data attribution mechanisms can lead to significant legal and societal issues, resulting in a growing backlash against the broader use of data for model training \citep{backlash}. 
For instance, there is a risk of violating intellectual property rights, failing to fairly compensate data creators, and disincentivizing them from producing new, high-quality content \citep{henderson2023foundation}. 
This has already resulted in legal disputes, such as the New York Times' lawsuit against Microsoft/OpenAI~\citep{grynbaum2023times}. 
Moreover, foundation models are often trained on massive datasets scraped from the internet, which can include low-quality and harmful content \citep{gao2020pile,raffel2020exploring,touvron2023llama}. 
Problematic data not only wastes computational resources but also skews model outputs, potentially leading to biased or inaccurate results. By understanding the contribution of each data source, we can identify and mitigate the influence of low-quality data, thereby improving the efficiency and quality of model training.

\vspace{-1mm}
\add{Since the training data of foundation models comes from multiple stakeholders, it is essential to have an algorithm that can \emph{fairly} attribute data contributions.} 
In recent years, significant progress has been made in understanding what it means to fairly \add{quantify and} attribute data source contributions, with \emph{the Shapley value} \citep{shapley1953value} emerging as a widely adopted framework \citep{ghorbani2019data, jia2019towards}. 
Originating from cooperative game theory, the Shapley value uniquely satisfies several desirable properties: 
(1) It assigns equal scores to equally impactful data points, ensuring fairness; (2) the sum of contribution scores equals the total utility, meaning the scores always represent a share of the total utility; (3) it supports additive decomposition across multiple utility functions, allowing for the calculation of contributions to the entire test set by summing the scores of individual test points. 
As the Shapley value uniquely satisfies these properties, it eliminates the uncertainty and ambiguity surrounding which attribution frameworks should be used conceptually. While there are other non-Shapley data attribution frameworks, such as Datamodels~\citep{park2023trak,ilyas2022datamodels} and influence functions \citep{koh2017understanding}, they lack the clear theoretical foundation and uniqueness provided by the Shapley value.

\vspace{-1mm}
\textbf{Original Data Shapley definition faces computational \& conceptual limitation.} However, Data Shapley, i.e., the application of the Shapley value in data attribution, has been limited to very small-scale models. Existing methods \citep{ghorbani2019data} to estimate the Shapley value require retraining the model numerous times using different subsets of data to evaluate the contribution of each data source, making them computationally infeasible for foundation models. 
On the other hand, retraining-based methods suffer from a conceptual issue often overlooked in the literature: \textbf{they assess data contribution for a general learning algorithm rather than a particular model.} While one might interpret the former as an approximation of the latter, these two quantities can be quite different in practice, especially when the learning algorithm is randomized and sensitive to factors like random initialization and training data order. In many real-life scenarios, however, the primary interest lies in understanding data contribution to the specific model being trained and deployed.

\vspace{-1mm}
This paper introduces \emph{In-Run Data Shapley}, a novel approach that makes fair data attribution applicable to large-scale foundation models. Unlike retraining-based Data Shapley, In-Run Data Shapley quantifies the contribution of each data source to \emph{the specific target model of interest}.


\vspace{-1mm}
\textbf{Technical contributions.} 
\add{Our key insight is that ML models are trained using iterative algorithms, where the model performance change in one iteration is sufficiently small to be accurately approximated by first- or second-order Taylor expansions.} We show that the Shapley value for the approximated one-step model performance change can be derived analytically via gradient dot-products or gradient-Hessian-gradient products between training and validation data. 
Hence, we can compute the Data Shapley scores for each model update step, and accumulate the scores throughout the training process. 
However, the per-sample gradient vectors required for computing the Shapley value in each training iteration introduce significant overhead due to per-sample gradient calculation. To address this challenge, we develop a series of technical tools that enable the exact calculation of gradient dot-products and gradient-Hessian-gradient products in one and two backward passes, respectively, without the need to instantiate any additional gradient vectors or Hessian matrices. Collectively, these tools allow for the efficient computation of In-Run Data Shapley. \add{In particular, with sufficient GPU memory, its most efficient implementation is as fast as regular training.}



\vspace{-1mm}
\textbf{Empirical implications.} Given the efficient algorithms developed in this paper, for the first time, one can perform data attribution on the scale of foundation model pretraining. While in this paper, we focus on GPT2 and Pythia-410M as a pilot study, our approach is applicable to larger-scale industrial models with sufficient computing resources. We performed various case studies that provide fresh insights into training data's contribution to the foundation model pretraining.

\vspace{-1mm}
\textbf{(1) There is considerable room for improvement in data curation for pretraining (Section \ref{sec:eval-dataquality}).} Even well-curated pretraining corpora contain data points that negatively impact the training process. We demonstrate the effectiveness of In-Run Data Shapley in identifying these low-quality data points. By computing In-Run Data Shapley values during training and removing negatively valued data points, we show that the cleaned dataset leads to significantly faster model convergence and improved performance compared to the original dataset. Interestingly, despite the Pile dataset \citep{gao2020pile} already undergoing multiple layers of curation, In-Run Data Shapley assigns negative values to approximately 16\% of the data. We found a significant amount of noisy data among them, highlighting the need for improved data curation for foundation model training.


\vspace{-1mm}
\textbf{(2) Data's contribution is stage-dependent (Section \ref{sec:eval-stage}).}
In-Run Data Shapley can capture the dynamics of contribution through the course of training, a fine-grained aspect that cannot be captured by prior works. 
In-Run Data Shapley shows that in the early stages of training, general corpora tend to have a relatively large contribution regardless of the downstream tasks. This is because general corpora help the model learn basic language patterns, grammar, and common knowledge. However, in the later stages of training, the contribution from domain-specific corpora becomes dominant, and the contribution of the general corpus phases out.

\vspace{-1mm}
\textbf{(3) Rethinking copyright in generative AI: contribution beyond memorization (Section \ref{sec:eval-copyright}).} 
We studied training data's contribution to validation points of varying similarity levels. We found that even when the validation data is a complete rewrite of the training data while maintaining the topic, the training data still contributes significantly. This finding has implications for the current dialogue around what constitutes a copyright violation in generative AI \citep{mulligan2024generative}. 
While the unfair use of copyrighted content is generally only considered when the generated data is an almost verbatim replication of the training data, our contribution analysis shows that some data owners should receive a certain royalty share for generated content, even if the output does not closely resemble the copyrighted material.

\section{Background of Data Shapley}

\vspace{-1mm}
In this section, we formalize the setup of data attribution for ML and revisit Data Shapley's definition. 

\vspace{-1mm}
\textbf{Setup \& Goal.} 
Given a dataset $\trainset := \{ z_i \}_{i=1}^N$, data attribution or valuation aims to assign a score to each training data point $z_i$, reflecting its importance for the trained ML model's performance on a certain task. Formally, we seek a score vector $(\phi_{z_i})_{i=1}^N$ where each $\phi_{z_i} \in \R$ reflects the \emph{value} of $z_i$.

\vspace{-1mm}
The Shapley value (SV) \citep{shapley1953value}, originating from game theory, stands out as a distinguished method for equitably distributing total profit among all participating players. Before diving into its definition, we first discuss a fundamental concept: the \emph{utility function}.

\textbf{Utility function.}
A \emph{utility function} maps an input dataset to a score indicating the utility of the dataset for model training. In most of the existing literature \citep{ghorbani2019data,jia2019towards}, the utility function $U$ is chosen as the performance (e.g., accuracy or loss) of the trained models on a hold-out validation set. That is, given a training set $S$, the utility function $\U(S) := \metric(\A(S))$, where $\A$ represents a learning algorithm that trains a model on dataset $S$, and $\metric(\cdot)$ is a function assessing the model's performance. For example, $\metric(\cdot)$ can be the accuracy for a classification task or the perplexity for a language completion task, evaluated on a (set of) hold-out validation data. 

\begin{definition}[Shapley value \citep{shapley1953value}]
\label{def:shapley-value}
Let $\U(\cdot)$ denote a utility function and $D$ represent a training set of $N$ data points. The Shapley value, $\phi_z\left( \U \right)$, assigned to a data point $z \in D$ is defined as 
$
\phi_z\left( \U \right) := \frac{1}{N} \sum_{k=1}^{N} {N-1 \choose k-1}^{-1} \sum_{S \subseteq D_{-z}, |S|=k-1} \left[ \U(S \cup \{z\}) - \U(S) \right]
$ 
where $D_{-z} = D \setminus \{z\}$. 
\end{definition}

\vspace{-1mm}
In simple terms, the Shapley value is a weighted average of the \emph{marginal contribution} $\U(S \cup \{z\}) - \U(S)$, i.e., the utility change when the point $z$ is added to different $S$s. For simplicity, we often write $\phi_z$ when the utility function is clear from the context. The popularity of the Shapley value is attributable to the fact that it is the \emph{unique} data value notion satisfying four axioms: Null player, Symmetry, Linearity, and Efficiency. 
The mathematical definitions of these axioms are deferred to Appendix \ref{appendix:shapley-axioms}. 
Here, we introduce the \emph{linearity} axiom which will be used later.

\vspace{-1mm}
\begin{theorem}[Linearity of the Shapley value \citep{shapley1953value}]
\label{thm:linearity}
For any of two utility functions $\U_1, \U_2$ and any $\alpha_1, \alpha_2 \in \R$, we have $\phi_{z} \left( \alpha_{1} \U_{1}+\alpha_{2} \U_{2}\right)=\alpha_{1} \phi_z\left(\U_{1}\right)+$ $\alpha_{2} \phi_z\left( \U_{2}\right)$.
\end{theorem}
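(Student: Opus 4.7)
The plan is to prove linearity directly from Definition \ref{def:shapley-value}, exploiting the fact that the Shapley formula expresses $\phi_z$ as a fixed weighted sum of marginal contributions, where the weights depend only on $N$, $|S|$, and the combinatorial structure, and not on the utility function. Since each marginal contribution $U(S \cup \{z\}) - U(S)$ is itself a linear functional of $U$, the conclusion will follow by pushing the scalars $\alpha_1, \alpha_2$ through the sums.

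Concretely, I would first unpack the definition of the combined utility: for any $S \subseteq D$, by the pointwise definition of $\alpha_1 U_1 + \alpha_2 U_2$, we have $(\alpha_1 U_1 + \alpha_2 U_2)(S) = \alpha_1 U_1(S) + \alpha_2 U_2(S)$. Substituting into the marginal contribution gives
\begin{equation*}
(\alpha_1 U_1 + \alpha_2 U_2)(S \cup \{z\}) - (\alpha_1 U_1 + \alpha_2 U_2)(S) = \alpha_1 \bigl[U_1(S\cup\{z\}) - U_1(S)\bigr] + \alpha_2 \bigl[U_2(S\cup\{z\}) - U_2(S)\bigr].
\end{equation*}

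Next, I would plug this identity into the Shapley formula for $\phi_z(\alpha_1 U_1 + \alpha_2 U_2)$ and use the linearity of finite sums (specifically, the fact that the inner sum over $S \subseteq D_{-z}$ with $|S|=k-1$ and the outer sum over $k$, as well as the factors $\frac{1}{N}\binom{N-1}{k-1}^{-1}$, do not depend on the utility function). This lets me split the single sum into two, pull out $\alpha_1$ and $\alpha_2$, and recognize each resulting expression as exactly the Shapley formula applied to $U_1$ and $U_2$ respectively. That yields $\alpha_1 \phi_z(U_1) + \alpha_2 \phi_z(U_2)$, completing the proof.

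There is no real obstacle here — the result is essentially a bookkeeping exercise reflecting that $\phi_z$ is a linear functional on the vector space of utility functions (viewed as real-valued functions on the power set of $D$). The only thing to be careful about is stating the pointwise definition of the linear combination of utilities cleanly so that the marginal-contribution splitting step is unambiguous, and ensuring that the scalars $\alpha_1, \alpha_2$ are pulled outside both the inner and outer sums.
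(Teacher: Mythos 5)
Your proof is correct and is the standard argument. The paper itself does not include a proof of this theorem — it is cited directly from Shapley's original 1953 work — so there is nothing in the paper to compare against; your plan of pushing $\alpha_1,\alpha_2$ through the marginal contributions and then through the fixed combinatorial weights (which depend only on $N$ and $|S|$, not on the utility) is exactly the right bookkeeping and would compile into a complete, correct proof.
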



\vspace{-1mm}
\textbf{Retraining-based Data Shapley.} 
The convention of defining the utility function for Data Shapley as $\U(S) = \metric(\A(S))$ was introduced in \citet{ghorbani2019data}, where $\A$ is a learning algorithm such as a neural network trained by stochastic gradient descent (SGD) or its variants. With this choice of utility function, the precise calculation of the Shapley value requires retraining models on various subsets of the data. This is because the marginal contribution of a data point, $\U(S \cup \{z\}) - \U(S)$, can only be obtained by training models on both $S$ and $S \cup \{z\}$ and comparing their performance. As a result, we refer to this method as "\emph{Retraining-based Data Shapley}". 

\vspace{-1mm}
\textbf{Limitations beyond efficiency (detailed in Appendix \ref{appendix:retraining-based-shapley-limitations})}: In addition to the high computational costs, we emphasize that retraining-based Data Shapley also suffers from the following limitations:
\textbf{(1) Highly unstable value scores:} When stochastic learning algorithms such as SGD are used, the resulting value scores can be highly unstable \citep{wang2023data}. This instability may lead to unreliable results and potential violations of Shapley value's fairness axioms.
\textbf{(2) Conceptual limitations:} Retraining-based Data Shapley measures the average data contribution to the learning process itself, across many retrainings on different data subsets. As a result, it produces attribution scores that apply broadly to the algorithm but fail to reflect the data contribution to a specific training run. 
On the other hand, providing insights into how individual data points contribute to the deployed model enables more targeted analysis and debugging, thereby improving model interpretability.



\vspace{-1mm}
\section{In-Run Data Shapley}
\label{sec:Shapley-for-one-run}

\vspace{-1mm}
\add{To address the issues associated with Retraining-based Data Shapley such as high computational costs, value instability, and the inability to assess the contribution towards a specific trained model, we propose a novel data attribution method specifically tailored for a single training run. Our key idea is to leverage the iterative nature of model training and employ a "divide and conquer" approach: breaking down the problem of valuing data contributions for the entire training process into subproblems of valuing data contributions for individual iterations.}


\vspace{-1mm}
\textbf{Utility function for a single gradient update.} Traditionally, the utility function $U(S) = \metric(\A(S))$ encapsulates the overall impact of a training set $S$ across the complete training process. 
Here, we instead consider a "local utility function" that evaluates the impact of data subsets within a single iteration. 
Specifically, given a training dataset $\trainset = \{z_i\}_{i=1}^N$, a deep learning model is usually being trained to minimize the training loss $\sum_{i=1}^N \ell(w, z_i)$ via an iterative optimization procedure such as SGD. 
The performance of the model is typically being measured through a set of validation points $\{\zval\}$. 
During an iteration $t$, a batch $\batch \subseteq \trainset$ of the training points is used to update the model parameters from $w_t$ to $w_{t+1}$ with 
$
w_{t+1} := w_t - \eta_t \sum_{z \in \batch} \g \ell(w_t, z)
$, 
where $\eta_t$ is the learning rate at iteration $t$.\footnote{Note that in practice, we take the gradient average for the update, but here we incorporate the normalization term $|\batch|$ into the learning rate $\eta_t$ for a clean presentation.}
A complete run of neural network training thus consists of model checkpoints $\{w_0, w_1, \ldots, w_T\}$. For a given validation data point $\zval$, we can define the "local utility function'' at a single iteration $t$ as 
\begin{align}
\label{eq:local-utility-func}
U^{(t)}(S; \zval)
:= \ell(\wtil_{t+1}(S), \zval) - \ell(w_{t}, \zval) 
\end{align}
where $\wtil_{t+1}(S) := w_t - \eta_t \sum_{z \in S} \g \ell(w_t, z)$ and $S \subseteq \batch$ is a subset of the batch being selected in $t$-th iteration in the original training. \textbf{Interpretation:} The local utility function $U^{(t)}$ represents the loss change at iteration $t$ when only the subset $S$ is used for the gradient update. This approach incorporates the realization of random batch selection at $t$-th iteration into the utility function. It can also encode other forms of training randomness (e.g., dropout) at iteration $t$. 
By accounting for the specific realization of training randomness, we obtain a deterministic utility function for each iteration, effectively enabling the targeted attribution to the specific training run. 

\vspace{-1mm}
\textbf{Data Shapley for a single gradient update.} 
While the utility $\Ut$ is defined over $\batch$ instead of the full training set $\trainset$, it is easy to augment it to $\trainset$.
More formally, in the augmented utility function we have $\wtil_{t+1}(S) := w_t - \eta_t \sum_{z \in S \cap \batch} \g \ell(w_t, z)$, $S \subseteq \trainset$. The Shapley value $\phi_z(\Ut)$ will be exactly the same as the Shapley value corresponds to the augmented utility function for any $z \in \batch$, and $\phi_z(\Ut) = 0$ for any $z \in \trainset \setminus \batch$ (see Theorem 5 in \citet{wang2023notegroup}). Therefore, for a clean presentation, we slightly abuse the notation where $\Ut$'s meaning depends on the context. 

\vspace{-1mm}
\textbf{Data Shapley for the entire training run.}  Building on the concept of a "local" utility function for a single gradient update iteration, we naturally extend this to a "global" utility function for the entire training process, defined as $ U(S) = \sum_{t=0}^{T-1} U^{(t)}(S)$. \textbf{Interpretation:} This global utility function can be interpreted as the cumulative loss change of the entire training run, but under the counterfactual scenario where only a subset of the training data $S$ is used. In other words, it aggregates the total impact of the subset $S$ on the model's performance throughout the entire training process.
Due to the linearity property of the Shapley value (Theorem \ref{thm:linearity}), we have $\phi_z(U) = \sum_{t=0}^{T-1} \phi_z(U^{(t)})$. 
This new Data Shapley value, which we call \emph{In-Run Data Shapley}, represents the cumulative contribution of the data point $z$ across all gradient update iterations within a single training run.
This approach breaks down the broader utility into more manageable, step-by-step assessments that capture the immediate effects of data points on model updates, and provide a more fine-grained view of how individual data points contribute to the model's performance at each step of the training process. Notably, the sum of individual data points' Shapley values equals the overall loss reduction achieved by the model during the entire training run due to the Shapley value's efficiency axiom (see Appendix \ref{appendix:shapley-axioms}). This provides a meaningful and interpretable measure of data importance. 
In Appendix \ref{appendix:discussion-retrain-vs-inrun}, we give an in-depth comparison between Retraining-based and In-Run Data Shapley.


\vspace{-0.5mm}
\begin{remark}[\textbf{Multiple validation points}]
In practice, the model performance is often being assessed based on a validation set $\Dval = \{\zval\}$. 
After computing $\phi_{z} \left(\U(\cdot; \zval) \right)$ for each $\zval \in \Dval$, one can compute the Shapley value corresponding to the utility function on the full validation set $\U(S; \Dval) := \sum_{\zval \in \Dval} \U(S; \zval)$ by simply taking the sum $\phi_{z} \left(\U(\cdot; \Dval) \right) = \sum_{\zval \in \Dval} \phi_{z} \left(\U(\cdot; \zval) \right)$ due to the \emph{linearity} property of the Shapley value (Theorem \ref{thm:linearity}). 
Hence, for a clean presentation, we consider only a single $\zval$ in this paper. However, all the techniques we developed can be extended to multiple validation points.
\end{remark}

\vspace{-1mm}
\section{Efficient Computation of In-Run Data Shapley}
\label{sec:efficient-comp}


\vspace{-1mm}
The newly proposed In-Run Data Shapley does not require retraining models from scratch on different data subsets. However, calculating $ \phi_z(U^{(t)}) $ for each training iteration remains computationally intensive, as it involves evaluating the performance impact of all possible combinations within the sampled data batch. In this section, we introduce an efficient method for approximating In-Run Data Shapley scores during a specific training run. 
Our approach, distinct from Monte Carlo methods, is deterministic and optimized to minimize additional runtime to regular training. In particular, in its most efficient implementation, our approximation technique incurs negligible extra runtime beyond what is required for standard model training, making it highly practical for real-world applications.



\vspace{-1mm}
\subsection{Approximating $U^{(t)}$ with Taylor Expansion}
\label{sec:efficiency-taylor}

\vspace{-1mm}
To derive a more tractable structure for the local utility function $U^{(t)}$, we propose using first and second-order Taylor approximations. The advantage of this approach is that the approximated utility function exhibits a form where closed-form Data Shapley formulas can be derived. 
The second-order Taylor approximation to the local utility function is as follows:
\begin{equation*}
\resizebox{\columnwidth}{!}{
$\begin{aligned}
U^{(t)}(S)
&= 
\ell(\wtil_{t+1}(S), \zval) - \ell(w_{t}, \zval) \\
&= 
\underbrace{
\g \ell(w_t, \zval) \cdot (\wtil_{t+1}(S)-w_t)}_{ \Uone(S) } + 
\frac{1}{2} 
\underbrace{
(\wtil_{t+1}(S)-w_t)^{\tp} \hessian_t (\wtil_{t+1}(S)-w_t) }_{ \Utwo(S) }
+~\text{higher order terms}
\end{aligned}$
}
\end{equation*}
where the Hessian matrix $\hessian_t := \g^2 \ell(w_t, \zval)$. We label the first-order term as $\Uone(S)$ and the second-order term as $\Utwo(S)$. Note that the gradient update $\wtil_{t+1}(S) - w_t = - \eta_t \sum_{z \in S} \g \ell(w_t, z)$. 
Given that the learning rate $\eta_t$ in model training is typically small, a lower-order Taylor expansion often provides an accurate approximation for the change in loss during a single gradient update, with approximation errors of $O(\eta_t^2)$ and $O(\eta_t^3)$ for first and second-order approximations, respectively. 
In Appendix \ref{appendix:eval-error-taylor}, we empirically investigate the errors of first- and second-order approximations to $ \Ut $ on GPT2. In particular, the first-order approximation can already achieve a great performance with Spearman correlation $>0.94$.

\vspace{-1mm}
\textbf{First-order In-Run Data Shapley.} 
Using the first-order approximation \( \Ut \approx \Uone \), and substituting the gradient update expression, we have $\Uone(S) = - \eta_t \sum_{z \in S} \g \ell(w_t, \zval) \cdot \g \ell(w_t, z)$. This shows that \( \Uone \) is an \emph{additive} utility function with a closed-form Shapley calculation as follows:
\begin{theorem}
In-Run Data Shapley considering the first-order approximation has closed-form 
\begin{align*}
\phi_{z}\left( U \right) \approx \sum_{t=0}^{T-1} \phi_{z}\left( \Uone \right)
\end{align*}
where 
\begin{align*}
\phi_{z}\left( \Uone \right) = - \eta_t \g \ell(w_t, \zval) \cdot \g \ell(w_t, z),~~~t = 0, \ldots, T-1
\end{align*}
\label{thm:firstorder}
\end{theorem}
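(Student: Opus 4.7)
The plan is to chain together three elementary facts: (i) the linearity of the Shapley value applied to the decomposition $U = \sum_{t=0}^{T-1} U^{(t)}$, (ii) the first-order Taylor approximation $U^{(t)} \approx \Uone$ derived in the preceding subsection, and (iii) a closed-form Shapley computation for additive utility functions. Each piece is already set up in the text, so the proof is essentially a bookkeeping argument.

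First, I would write $\phi_z(U) = \sum_{t=0}^{T-1} \phi_z(U^{(t)})$, which is an immediate consequence of Theorem~\ref{thm:linearity} (the linearity axiom) applied iteratively to the global utility $U(S) = \sum_t U^{(t)}(S)$ introduced in Section~\ref{sec:Shapley-for-one-run}. Then, invoking the Taylor expansion displayed just above the theorem, I would replace each $U^{(t)}$ by its first-order surrogate $\Uone$, incurring only $O(\eta_t^2)$ error per iteration and yielding $\phi_z(U) \approx \sum_{t=0}^{T-1} \phi_z(\Uone)$.

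The core step is then to evaluate $\phi_z(\Uone)$ in closed form. Substituting $\wtil_{t+1}(S) - w_t = -\eta_t \sum_{z' \in S} \g\ell(w_t, z')$ into the definition of $\Uone$ gives
\begin{equation*}
\Uone(S) = -\eta_t \sum_{z' \in S} \g\ell(w_t, \zval) \cdot \g\ell(w_t, z'),
\end{equation*}
which is additive in $S$, i.e.\ of the form $\Uone(S) = \sum_{z' \in S} f_t(z')$ with $f_t(z') := -\eta_t \g\ell(w_t, \zval) \cdot \g\ell(w_t, z')$. For any additive utility, the marginal contribution $\Uone(S \cup \{z\}) - \Uone(S) = f_t(z)$ is independent of $S$, so plugging into Definition~\ref{def:shapley-value} collapses the weighted average to $\phi_z(\Uone) = f_t(z)$. (Equivalently, this follows from the efficiency and symmetry axioms applied to the sum of single-point sub-games $S \mapsto f_t(z')\,\ind[z' \in S]$, each of which has a trivial Shapley solution by the null-player axiom.)

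I do not anticipate a real obstacle here: the result is a clean corollary of linearity plus the closed-form Shapley value of an additive game, and the Taylor approximation has already been justified in Section~\ref{sec:efficiency-taylor}. The only subtlety worth flagging explicitly is the remark preceding the theorem about extending the domain of $\Ut$ from $\batch$ to all of $\trainset$: one must note that for $z \notin \batch$ the gradient $\g\ell(w_t,z)$ never enters the update, so $f_t(z) = 0$ is consistent with $\phi_z(\Ut) = 0$ in that case, matching the null-player axiom and the statement in the excerpt.
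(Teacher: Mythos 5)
Your proposal is correct and matches the paper's own argument: the paper likewise observes that $\Uone(S) = -\eta_t \sum_{z_j \in S} \g\ell(w_t,\zval)\cdot\g\ell(w_t,z_j)$ is additive in $S$, so the marginal contribution $\Uone(S\cup\{z\})-\Uone(S) = -\eta_t\,\g\ell(w_t,\zval)\cdot\g\ell(w_t,z)$ is constant in $S$, whence Definition~\ref{def:shapley-value} collapses to the stated closed form. The linearity and Taylor steps you front-load are the same ones already built into the theorem statement and surrounding text, and your remark about $z \notin \batch$ being consistent with the null-player axiom is a nice but inessential sanity check.
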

Thus, the first-order approximation of In-Run Data Shapley for a training point accumulates its gradient dot products with the validation data point each time the training point is sampled in the training batch. 
The gradient dot product between the training point $z_i$ and the validation point $\zval$ represents the direct influence of $z_i$ on the validation loss at the current model parameters $w_t$, which essentially measures the alignment between the two gradient vectors in the parameter space.
Notably, \( \phi_z\left(\Uone\right) \) is equivalent to the TracIN-Ideal score proposed by \citep{pruthi2020estimating}. That is, the TracIN-Ideal score can be interpreted as the Shapley value when we use first-order Taylor approximation for $\U^{(t)}$. 
However, TracIN-Ideal has been described as "computationally infeasible," and our approach completely overcomes this problem. In Appendix \ref{appendix:related-work-TracIN}, we provide a detailed discussion of the differences between this work and \citep{pruthi2020estimating}. 

\vspace{-1mm}
\textbf{Second-order In-Run Data Shapley.} 
We further improve the approximation of $\Ut$ using a second-order Taylor expansion, i.e., $\Ut \approx \Uone + \frac{1}{2} \Utwo$. 
Fortunately, the approximated utility function maintains a tractable structure that allows a closed-form Shapley value calculation.
\begin{theorem}
\label{thm:secondorder}
In-Run Data Shapley considering the second-order approximation has closed-form 
\begin{align}
\phi_{z}\left( U \right) \approx \sum_{t=0}^{T-1} \left( \phi_z\left(\Uone\right) + \frac{1}{2} \phi_z\left(\Utwo\right) \right)
\end{align} 
where 
\begin{equation}
\resizebox{0.9\columnwidth}{!}{
$\begin{aligned}
\phi_z\left(\Uone\right) + \frac{1}{2} \phi_z\left(\Utwo\right) 
= 
\underbrace{
- \eta_t \g \ell(w_t, \zval) \cdot \g \ell(w_t, z)}_{ \text{\circled{1} influence of } z \text{ on the loss of }\zval } 
+ 
\underbrace{
\frac{\eta_t^2}{2} 
\g \ell(w_t, z)^{\tp} \hessian_t  \left(\sum_{z_j \in \batch } \g \ell(w_t, z_j) \right) }_{ \text{\circled{2} interaction between }z\text{ and other training points} }
\end{aligned}$
}
\label{eq:util}
\end{equation}
for any $t = 0, \ldots, T-1$. 
\end{theorem}
\vspace{-1mm}
Compared to the first-order variant, the second-order In-Run Data Shapley includes an additional gradient-Hessian-gradient product term that captures the interaction between the training point of interest $z$ and the rest of the training set. The Hessian matrix represents the curvature of the validation loss function at the current model parameters $w_t$. This interaction term measures the alignment between the gradient of $z$ and the gradients of the other points in the training batch, adjusted by the Hessian. If this term is large, it indicates that the presence of other points in the batch significantly impacts the value attributed to $z$. For example, if there are many identical or similar copies of $z$ in the training set, the contribution of $z$ will decrease, as the interaction term will be large, effectively distributing the value among the similar points. By incorporating this interaction term, the second-order In-Run Data Shapley provides a more fine-grained contribution measure \add{that takes into account both the relevance of a data point towards a validation set and its uniqueness within the population.}

\subsection{Efficient Computation of Gradient Dot-product and Gradient-Hessian-Gradient Product}
\label{sec:efficiency-ghost}

Although we have derived closed-form formulas for In-Run Data Shapley using first- or second-order Taylor approximation of the local utility functions, efficiently computing these values remains a challenge. Specifically, for the first-order In-Run Data Shapley, it requires computing \circled{1} the pairwise gradient dot products between each $z \in \batch$ and the validation point. For the second-order In-Run Data Shapley, it additionally requires computing \circled{2} the gradient-Hessian-gradient products for each $z \in \batch$. A direct implementation to compute \circled{1} involves calculating the individual gradient for each data point in $\batch$, which cannot benefit from fast batch processing in GPUs and necessitates running backpropagation $|\batch|$ times with a mini-batch size of 1. Consequently, this approach would be at least $|\batch|$ times slower than regular training, making it computationally prohibitive for practical applications. Furthermore, computing \circled{2} requires either computing each individual gradient again or storing all individual gradients, which incurs significant time or memory overhead.


\textbf{Computing pair-wise gradient dot-products in 1 backpropagation.} 
Our technique for efficiently computing pairwise gradient dot products is inspired by the "ghost clipping" technique from the differential privacy (DP) literature \citep{lee2021scaling}. "Ghost clipping" enables computing \emph{all} of the per-sample gradient norms within one backpropagation without explicitly forming any individual gradient vectors, which enhances the efficiency of DP model training. Here, we propose a "ghost dot-product" technique that shares the idea of exploiting the computation that has been done in the backpropagation.
Specifically, denote a sample batch as $\batch = \{z_1, \ldots, z_B\}$. We demonstrate this technique using a simple linear layer $\bs = \ba \bW$, where $\bW \in \R^{d_1 \times d_2}$ is the weight matrix, $\ba = (\ba^{(1)}, \ldots, \ba^{(B)})^\tp$ is the mini-batch input, and $\bs = (\bs^{(1)}, \ldots, \bs^{(B)})^{\tp}$ is the output (i.e., the pre-activation tensor).
For (non-sequential) data, $\ba \in \R^{B \times d_1}, \bs \in \R^{B \times d_2}$. 
By applying the chain rule, we can express the gradient of an individual loss $\ell^{(i)} := \ell(w, z_i)$ with respect to $\bW$ as
\begin{equation}
\begin{aligned}
\frac{\del \ell^{(i)}}{\del \bW} =
\frac{\del \ell^{(i)}}{\del \bs^{(i)}} \outerprod
\frac{\del \bs^{(i)}}{\del \bW}
= \frac{\del \ell^{(i)}}{\del \bs^{(i)}} \outerprod \ba^{(i)} 
= \frac{\del \ell }{\del \bs^{(i)}} \outerprod \ba^{(i)}
\label{eq:linear-decompose-main}
\end{aligned}
\end{equation}
where $\ell := \sum_{j=1}^B \ell^{(j)}$ is the aggregated loss, and the last step is because other data points' losses have no dependency on $\bs_i$. Note that the individual's output gradient $\frac{\partial \ell^{(i)}}{\partial \bs^{(i)}} = \frac{\partial \ell}{\partial \bs^{(i)}}$ is readily available during the backpropagation pass in terms of $\ell$. 
Suppose we are interested in computing the gradient dot-product $\frac{\del \ell^{(1)}}{\del \bW} \odot \frac{\del \ell^{(2)}}{\del \bW}$ between two data points $z_1, z_2$ in the same batch in the backpropagation. 
For non-sequential data, we have each $\ba^{(i)} \in \R^{d_1 \times 1}$ and $ \frac{\del \ell^{(i)}}{\del \bs^{(i)}} \in \R^{1 \times d_2}$. 
By (\ref{eq:linear-decompose-main}), we have 
\begin{equation}
\resizebox{0.9\columnwidth}{!}{
$\begin{aligned}
\frac{\del \ell^{(1)}}{\del \bW}
\odot \frac{\del \ell^{(2)}}{\del \bW}
= 
\left( \ba^{(1)} \outerprod \frac{\del \ell^{(1)}}{\del \bs^{(1)}} \right) \odot 
\left( \ba^{(2)} \outerprod \frac{\del \ell^{(2)}}{\del \bs^{(2)}} \right)
= 
\left( \left( \ba^{(1)} \right)^\tp \ba^{(2)} \right) 
\left(\left( \frac{\del \ell^{(1)}}{\del \bs^{(1)}} \right)^{\tp} \left( \frac{\del \ell^{(2)}}{\del \bs^{(2)}} \right) \right)
\end{aligned}$
}
\label{eq:non-seq}
\end{equation}

Hence, we can first take the two inner products, and then multiply the results together. 
All of the quantities $\ba^{(1)}, \ba^{(2)}, \frac{\del \ell^{(1)}}{\del \bs^{(1)}}, \frac{\del \ell^{(2)}}{\del \bs^{(2)}}$ in (\ref{eq:non-seq}) that are required for computation are all already available in the backpropagation. Hence, within a \emph{single} backpropagation, we can efficiently compute the gradient dot-product between \emph{every} pair of $z_i, z_j \in \batch$. 
Since we are interested in computing the gradient dot-product between $\zval$ and $z$ for all $z \in \batch$, we can backpropagate on \(\sum_{z \in \batch} \ell^{(i)} + \ell^{(\zval)}\) to save another backpropagation for $\zval$. 
We call this technique the \emph{"ghost dot-product"}, as no gradient vectors are instantiated during the computation. 
Overall, we only need \emph{one} backpropagation to compute \circled{1} for \emph{all} data points in $\batch$, a significant improvement over the direct method requiring $\ge |\batch|$ backpropagations. 
Additional details for this technique are in Appendix \ref{appendix:efficiency}. 


\begin{remark}
While we illustrate our "ghost dot-product" technique using linear layers, it can be extended to other types of layers by leveraging similar decompositions as in Equation (\ref{eq:linear-decompose-main}) that have been developed in differential privacy literature \citep{rochette2020efficient, bu2022scalable, li2021large, bu2023differentially, kong2024unified}. 
\end{remark}

\textbf{Computing gradient-Hessian-gradient products in 2 backpropagations (Appendix \ref{appendix:efficiency-HVP}).} 
For second-order In-Run Data Shapley, an outstanding challenge is how to efficiently compute the pairwise interaction term among training points. In Appendix \ref{appendix:efficiency-HVP}, we develop a \emph{"ghost gradient-Hessian-gradient product"} technique for computing the desired quantity through one extra backpropagation pass, without materializing any gradient-sized vectors. This technique leverages several properties of neural network gradients across different layers, and its derivation is complex. 

\textbf{Further improvement of runtime and memory requirements (Appendix \ref{appendix:efficiency-bookkeeping}).} 
With the "ghost" techniques developed, the computation of first- and second-order In-Run Data Shapley requires one and two backpropagations in each gradient update iteration respectively. 
Although we still need to compute the gradient of the aggregated loss $\sum_{z_i \in \batch} \ell_i $ for the training batch to perform parameter updates, we do \emph{not} need an additional backpropagation. By reusing the activations and output gradients from the previous backpropagation on \(\sum_{z \in \batch} \ell^{(i)} + \ell^{(\zval)}\), we can easily compute this quantity \emph{without} incurring the cost of an extra backpropagation pass. Consequently, training while computing first-order In-Run Data Shapley will have minimal additional runtime overhead, as it still requires only one backpropagation per iteration. The second-order In-Run Data Shapley necessitates one extra backpropagation per iteration. Nevertheless, both methods provide significant advantages over the direct approach of instantiating per-sample gradients and Hessian-vector products. 





\section{Experiments}
\label{sec:eval}

In this section, we evaluate In-Run Data Shapley in terms of its efficiency (Section \ref{sec:eval-runtime}), fidelity (Section \ref{sec:eval-fidelity}), and its applications in data attribution for language model pretraining (Section \ref{sec:eval-effectiveness}). In Appendix \ref{appendix:rebuttal-exp-smallscale}, we compare a variety of existing data attribution methods in small-scale settings.



\vspace{-1mm}
\subsection{Runtime Evaluation}
\label{sec:eval-runtime}

\vspace{-1mm}
We empirically assess the computational efficiency of In-Run Data Shapley with "ghost dot-product" and "ghost vector-Hessian-vector product" techniques developed in Section \ref{sec:efficiency-ghost}. 
\begin{wraptable}{r}{0.4\textwidth}
\centering
\setlength\intextsep{0pt}
\setlength\abovecaptionskip{2pt}
\setlength\belowcaptionskip{-10pt}
\resizebox{0.4\columnwidth}{!}{\begin{tabular}{@{}cc@{}}
\toprule
\textbf{}                                   & \textbf{Throughput} \\ \midrule
\textbf{Regular Training}                   & 76.2                                              \\
\textbf{First-order Data Shapley (ghost)}   & 70.5                                              \\
\textbf{Second-order Data Shapley (ghost)}  & 34.4                                              \\
\textbf{First-order Data Shapley (direct)}  & 4.2                                               \\
\textbf{Second-order Data Shapley (direct)} & 1.8                                               \\ \bottomrule
\end{tabular}}
\caption{
Efficiency comparison of different implementations of In-Run Data Shapley. We use throughput, i.e., \# training data points being processed per second as the efficiency metric. 
}
\label{tb:efficiency}
\end{wraptable}
We compare this to the direct implementation of In-Run Data Shapley, which requires computing per-sample gradients, as well as to regular training without Data Shapley computations. 
The experiment is conducted by training GPT2-Small on a single 80GB A100 GPU. As illustrated in Table \ref{tb:efficiency}, the runtime of first-order In-Run Data Shapley is close to that of regular training when using the ghost dot-product algorithms developed in Section \ref{sec:efficiency-ghost}. The second-order In-Run Data Shapley is approximately $2\times$ slower than regular training due to the additional backpropagation. However, both the first- and second-order In-Run Data Shapley are significantly faster ($>30\times$) compared to the naive implementation. These results showcase the substantial improvements achieved by our techniques, making In-Run Data Shapley computationally feasible for practical applications. 

\vspace{-1mm}
\subsection{Fidelity Evaluation}
\label{sec:eval-fidelity}

\vspace{-1mm}

\begin{wrapfigure}{r}{0.5\textwidth}
    \centering
    \setlength\intextsep{0pt}
    \setlength\abovecaptionskip{0pt}
    \setlength\belowcaptionskip{-10pt}
    \vspace{-5mm}
    \includegraphics[width=\linewidth]{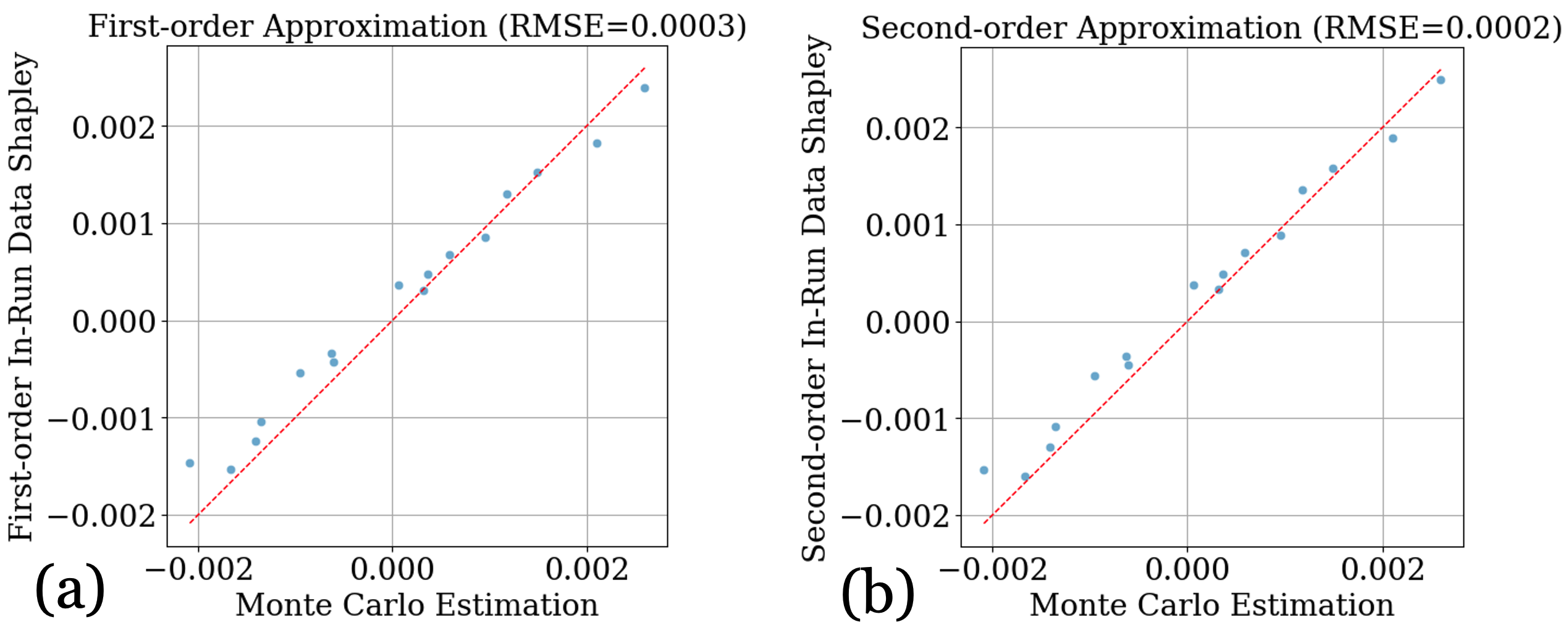}
    \caption{
    Comparison between the Monte Carlo-estimated In-Run Data Shapley and First/Second-order In-Run Data Shapley. 
    }
    \label{fig:shapley_approx}
\end{wrapfigure}
In this section, we directly assess the approximation accuracy of first- and second-order In-Run Data Shapley. 
In Appendix \ref{appendix:rebuttal-exp-smallscale}, we further compare the performance of In-Run Data Shapley with existing (less scalable) data attribution techniques on standard benchmarks (e.g., mislabeled data detection) as an additional sanity check.

\vspace{-1mm}
Given that computing the exact In-Run Data Shapley value is computationally prohibitive, we compare their performance against Monte Carlo estimates of the Shapley value using a large number of samples. The experiment is conducted on GPT2 model at the 3500th training iteration on Pile, where the batch size 16 and learning rate $3\times 10^{-4}$. We use 1000 permutations to approximate the groundtruth In-Run Shapley value. Figure \ref{fig:shapley_approx} shows that with just first-order In-Run Data Shapley, the root mean squared error (RMSE) is only around 0.0003. In Appendix \ref{appendix:fidelity-larger}, we provide additional results with different learning rates.

\vspace{-1mm}
\subsection{Case Study: Data Attribution on Pile Dataset}
\label{sec:eval-effectiveness}

\vspace{-1mm}
In this section, we present a case study to demonstrate the use cases of In-Run Data Shapley by pretraining on the well-known Pile dataset \citep{gao2020pile}. We explore its application in data curation, examine data contribution across different training stages, and investigate relevant corpus detection. Due to computational resource constraints, most of our experiments focus on the GPT-2 and Pythia-410M models, but this is not a limitation of the algorithm itself. With adequate computational resources, our approach can easily be applied to larger-scale models. 



\vspace{-1mm}
\subsubsection{Is Well-curated Dataset Actually Clean?}
\label{sec:eval-dataquality}

\vspace{-1mm}
Carefully curated pretraining corpora still contain data points that can adversely affect the training process. Identifying and removing these data points can accelerate model convergence and enhance overall performance, thereby saving computational resources. In this experiment, we demonstrate the effectiveness of In-Run Data Shapley in assessing the data quality of a subset of the Pile dataset. We uniformly select a random subset of Pile with around 10B tokens and train a GPT2 model on this subset. We compute the data attribution results with Pile's validation set. 
By filtering out all negatively valued corpora and retraining the model on the cleaned subset, we observe significant improvement in model convergence. For both first- and second-order In-Run Data Shapely, we can achieve around 25\% fewer training iterations to reach a test loss of 3.75. Surprisingly, our analysis reveals that around 16\% of the training corpora had negative second-order In-Run Data Shapley values. While some of the negatively valued corpora may be attributed to the significant domain shift compared to the validation corpora, we still find many low-quality corpora from Pile, a pretraining dataset that has undergone several layers of data curation \citep{gao2020pile}. 
Examples of low-quality corpora identified can be found in Appendix \ref{appendix:eval-dataquality}. 

\vspace{-1mm}

\begin{wrapfigure}{r}{0.35\textwidth}
    \centering
    \setlength\intextsep{0pt}
    \setlength\abovecaptionskip{0pt}
    \setlength\belowcaptionskip{-10pt}
    \vspace{-9mm}
    \includegraphics[width=0.35\textwidth]{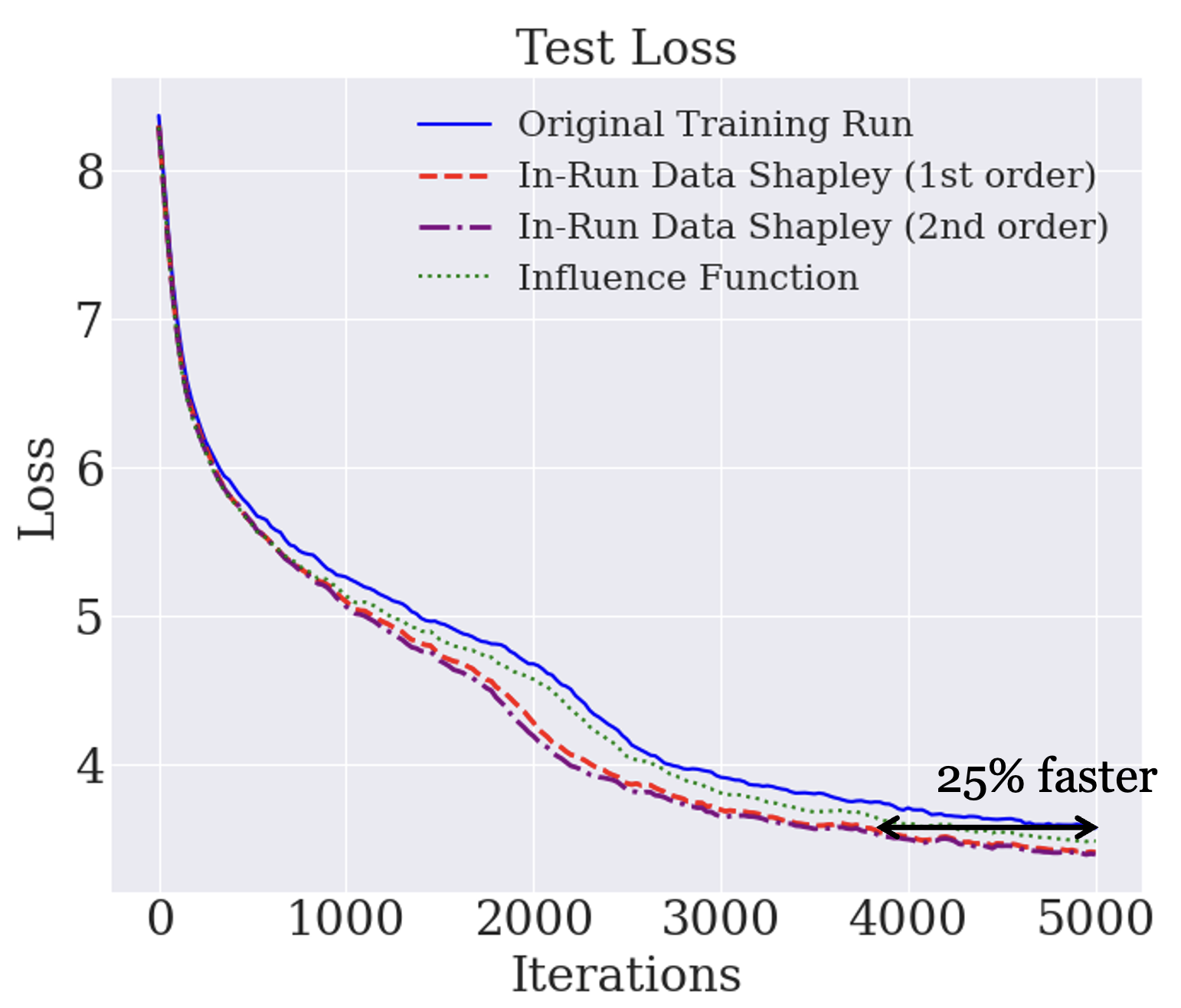}
    \caption{
    Test loss comparison between the original training run and the model trained on the cleaned subset according to different data attribution techniques.
    }
    \label{fig:dataselection}
\end{wrapfigure}
Figure \ref{fig:dataselection} shows a performance comparison between the original training run and the model trained on the cleaned subsets for GPT2, and additional results on Pythia-410M are available in Appendix \ref{appendix:rebuttal-exp-pythia}. We also compare with influence function \citep{koh2017understanding}, which approximates the change in the model's loss on the test example when the training example is removed from the training set (i.e., the leave-one-out score). We omit TRAK \citep{park2023trak} and other techniques such as datamodel \citep{ilyas2022datamodels} as they are not scalable to our setting. 
As we can see, influence function can also filter out low-quality data that can accelerate training convergence. However, the performance is slightly worse than In-Run Data Shapley as the influence function only uses information from the final trained models, which can result in highly noisy value scores since the removal of one training data point might have a negligible effect on the final model performance. The results demonstrate that removing lower-quality data leads to a significantly faster drop in test loss compared to the original training run. This implies that there is still huge room for data curation for well-curated datasets such as Pile.

\begin{table}[t]
\centering
\setlength\abovecaptionskip{2pt}
\setlength\belowcaptionskip{-10pt}
    \begin{minipage}{0.49\textwidth}
        \begin{tcolorbox}[title=\textbf{Original Wikipedia Corpus}, fontupper=\small, left=2pt, right=2pt, bottom=1pt, top=1pt, validationbox]
        In 2012, Radhi recruited new 'musicians' for OAG, who were selected from among the students of Akademi Seni Budaya dan Warisan Kebangsaan (). The new line-up consists of Qi Razali (drums/backing vocals - original drummer ...
        \end{tcolorbox}
    \end{minipage}
    \hfill
    \begin{minipage}{0.5\textwidth}
        \begin{tcolorbox}[title=\textbf{Synthetic "Similar topic" Corpus}, fontupper=\small, left=2pt, right=2pt, bottom=1pt, top=1pt, highvaluebox]
        \#\#\# Instruction: Write a short story about a classical violinist who decides to explore jazz music, detailing her first encounter with a jazz band. \#\#\# Answer: Elena, a classically trained violinist known for her precise and emotive performances ...
        \end{tcolorbox}
    \end{minipage}
\resizebox{\columnwidth}{!}{\begin{tabular}{@{}cccc:c@{}}
\toprule
\textbf{Similarity Category}      & \textbf{In-Run Data Shapley (1st order)} & \textbf{In-Run Data Shapley (2nd order)} & \textbf{Influence Function} & \textbf{BM25} \\ \midrule
\textbf{Partial exactly the same} & 1                                        & 1                                        & 1                            & 1             \\
\textbf{Paraphrase}               & 1                                        & 1                                        & 1                            & 1             \\
\textbf{Significant paraphrase}   & 32.3                                     & 32                                       & 39.3                          & 1.6           \\
\textbf{Similar topic}            & 145.6                                    & 141.6                                    & 292                          & 20917.3       \\ \bottomrule
\end{tabular}

\caption{
Top: (left) An original training corpus from Wikipedia. (right) A synthetic corpus falls in the category of "Similar topic" to the Wikipedia corpus on the left
(prompt in Appendix \ref{appendix:eval-copyright}). 
Bottom: the (average) value rank of the original corpus among all training corpora for validation corpora that are of varying similarity to the original corpus. \textbf{The rank is out of $\approx$320k data points.}} 
\label{tb:data-contamination}
\end{table}
\begin{figure}[t]
    \centering
    \setlength\intextsep{0pt}
    \setlength\abovecaptionskip{2pt}
    \setlength\belowcaptionskip{-10pt}
    \centering
    \begin{minipage}{0.6\textwidth}
        \includegraphics[width=\textwidth]{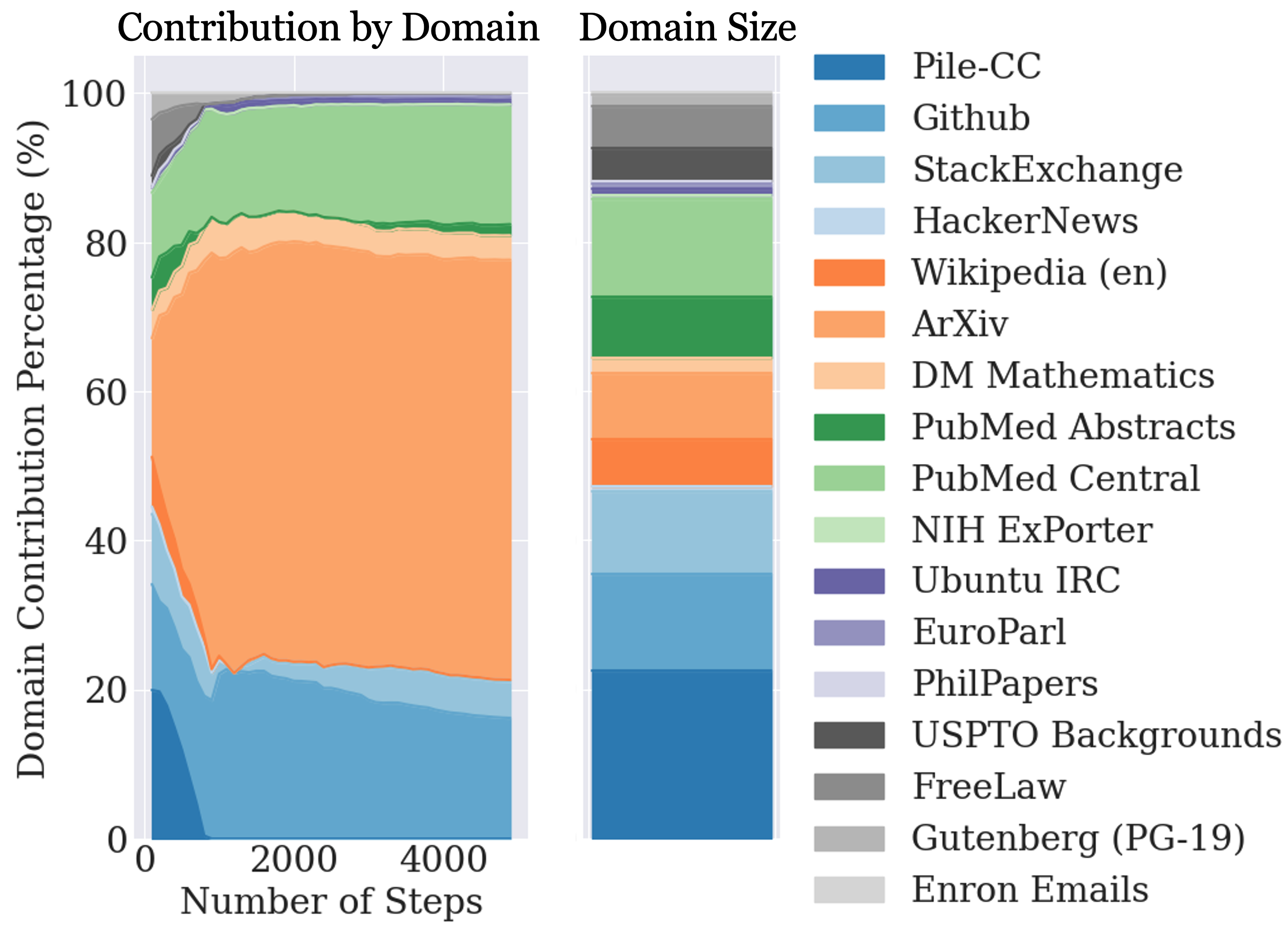}
    \end{minipage}
    \hfill
    \begin{minipage}{0.38\textwidth}
        \begin{tcolorbox}[title=\textbf{Validation Corpus}, fontupper=\small, left=2pt, right=2pt, bottom=1pt, top=1pt, validationbox]
        \input{corpus/validation_corpus}
        \end{tcolorbox}
        \begin{tcolorbox}[title=\textbf{$\mathbf{(+)}$ High-valued Corpus}, fontupper=\small, left=2pt, right=2pt, bottom=1pt, top=1pt, highvaluebox]
        \input{corpus/high-value-corpus}
        \end{tcolorbox}
        \begin{tcolorbox}[title=\textbf{$\mathbf{(-)}$ Low-valued Corpus}, fontupper=\small, left=2pt, right=2pt, bottom=1pt, top=1pt, lowvaluebox]
        \input{corpus/low-value-corpus}
        \end{tcolorbox}
    \end{minipage}
    \caption{
    Left: Domain value composition for a corpus of math text. 
    Right: The math corpus we use as the validation data for attribution, and examples of high- and low-valued training corpus for it.}
    \label{fig:domain-value-composition}
\end{figure}


\vspace{-1mm}
\subsubsection{How Do Data Values Change during Training?}
\label{sec:eval-stage}

\vspace{-1mm}
As In-Run Data Shapley tracks the cumulative data values across different training steps, we can assess the contribution of training points at various stages of training, providing a more fine-grained perspective on data attribution. We evaluate the data attribution results for a math-related validation data using second-order In-Run Data Shapley. 
In Figure \ref{fig:domain-value-composition}, we present the value composition of training corpora by their domains over the first 10,000 training iterations, summing the values of all corpora from the same domain. We then calculate the percentage of the total value attributed to each domain, excluding domains with a total value $<0$. 
As illustrated in the figure, the corpora from ArXiv achieve a significantly higher value compared to other domain corpora, far exceeding its size proportion within the full Pile dataset. This is expected, as ArXiv papers predominantly cover fields like Math, Computer Science, and Physics, which contain extensive math-related content. Furthermore, the value composition changes rapidly at the beginning of training and stabilizes as training progresses. We hypothesize that this initial fluctuation is due to the presence of relevant paragraphs in corpora from other domains. The stable value proportion observed in later stages likely reflects the relative abundance of math content in each domain. 
\add{Interestingly, we observe that Pile-CC domain, which contains general website crawls, initially shows positive contributions during the first few iterations. However, its value quickly drops to negative and eventually converges to zero. This implies that \textbf{general corpora tend to have a large contribution in the beginning of training}, as they help the model learn the basics of languages, such as grammar and common knowledge. However, as training progresses and the model focuses on more specialized topics, the relevance of general domains diminishes. 
An additional figure for the average domain values is in Appendix \ref{appendix:eval-stage}. 
}

\vspace{-1mm}
\subsubsection{Does Contribution Require Memorization?}
\label{sec:eval-copyright}

\vspace{-1mm}
In this experiment, we evaluate the robustness of different data attribution techniques in identifying relevant individual corpora that have been paraphrased. We start by selecting a data point from the training set and creating several paraphrased versions using GPT-4, with varying levels of paraphrasing (see Appendix \ref{appendix:eval-copyright} for the prompt). These paraphrased versions form our validation set. We then calculate the average value rank of the original training data for each of its paraphrased versions. 
\add{In addition to In-Run Data Shapley and influence function, we include the ranking result based on BM25 distance. \textbf{BM25} \citep{robertson2009probabilistic} featurizes examples by their word frequency statistics (i.e., TF-IDF) to rank the training instances. We use BM25 distance as an oracle for assessing the verbatim or lexical similarity between the validation data (query) and the training data, as opposed to semantic similarity.}

\vspace{-1mm}
As shown in Table \ref{tb:data-contamination}, even for a validation data that is a complete rewrite (with a low BM25 distance) but covers relevant topics, the original training data still ranks very high according to both In-Run Data Shapley and influence function. Influence function ranks the original training data lower than In-Run Data Shapley, which may be attributed to the inherent noisy nature of the leave-one-out error estimation. 
\add{The results of this experiment have important implications for the ongoing discussion about the copyright of generative AI. Specifically, the table presents a compelling example where the original Wikipedia training corpus related to a musician's experience can significantly contribute to generating a story about a musician, even when the generated story shares no token-wise resemblance to the original training data. This finding supports that training data profoundly influences the capabilities of generative AI models and should be compensated accordingly \citep{deng2023computational, deng2024economic}, even when the output does not closely resemble the original copyrighted material or when the model applies output filters to avoid generating verbatim replicates of the training data. This discovery expands the conventional focus on copyright violations, which typically addresses instances of near-verbatim replication, as seen in the dispute between New York Times and OpenAI, to also include cases where the generated content is significantly influenced by copyrighted material without directly replicating it.
}

\vspace{-1mm}
\section{Conclusion and Limitations}
\label{sec:conclusion}

\vspace{-1mm}
In this work, we introduce In-Run Data Shapley, a novel data attribution technique that addresses the limitations of Retraining-based Data Shapley. 
Extensive experiments demonstrate the effectiveness of In-Run Data Shapley in various applications. Here, we discuss the potential limitations of this work. 

\textbf{Availability of validation data before training.} 
One potential limitation of In-Run Data Shapley is that it requires the validation data to be available before training, as the data attribution scores are computed during the training process. However, there are many scenarios where validation data is naturally available before training, such as when using publicly available benchmark datasets, participating in machine learning competitions, or adhering to regulatory requirements. For scenarios where validation data arrives after the model has been trained, a potential solution is to save checkpoints of intermediate models during training and approximate In-Run Data Shapley using these checkpoints, in the same spirit of TracIN-CP described in \citet{pruthi2020estimating}. 
However, the choice of checkpoints can significantly impact performance, and it is unclear which checkpoints to pick. 

\vspace{-1mm}
\textbf{Extension to other optimization algorithms.} 
Extending the "ghost" family techniques developed in this work to support Adam and similar optimizers remains an exciting direction for future research. 
We stress that extending the formulation of In-Run Data Shapley from SGD to Adam is feasible, but the actual challenge lies in computing it efficiently without instantiating each individual gradient vector, which cannot be solved by simple extensions described in \citet{xia2024less}.  

\vspace{-1mm}
\textbf{Handling memory constraints.} In scenarios where GPU memory constraints prevent large batch sizes, 
the "ghost" techniques can be extended by using gradient accumulation. This approach accommodates larger training batch sizes by dividing the batch into smaller sub-batches and accumulating gradients over multiple iterations. While this method may increase runtime due to additional backpropagation steps, it maintains the feasibility of the techniques under memory constraints. Improving computational efficiency for large batch sizes remains an important direction for future research.

\clearpage

\section*{Acknowledgment}

This work is supported in part by the National Science Foundation under grants IIS-2312794, IIS-2313130, OAC-2239622, Amazon-Virginia Tech Initiative in Efficient and Robust Machine Learning, and the Commonwealth Cyber Initiative.

We thank Meng Ding, Chong Xiang, Chendi Wang for their helpful feedback on the preliminary version of this work. 

\bibliography{ref}
\bibliographystyle{iclr2025_conference}

\appendix

\clearpage

\section{Extended Related Works}
\label{appendix:related-works}

\subsection{Data Shapley Axioms}
\label{appendix:shapley-axioms}

\emph{Data Shapley} is one of the first principled approaches to data attribution being proposed \cite{ghorbani2019data, jia2019towards}. 
Data Shapley is based on the famous \emph{Shapley value} \citep{shapley1953value}. 
In almost all of the literature, the Shapley value is being justified as the \emph{unique} value notion satisfying the following four axioms:
\begin{enumerate}
    \item \textbf{Null player:} if $\U(S \cup \{z_i\})=\U(S)$ for all $S \subseteq D \setminus \{z_i\}$, then $\phi_{z_i}(\U)=0$. 
    \item \textbf{Symmetry:} if $\U(S \cup \{z_i\}) = \U(S \cup \{z_j\})$ for all $S \subseteq D \setminus \{z_i, z_j\}$, then $\phi_{z_i}(\U)=\phi_{z_j}(\U)$. 
    \item \textbf{Linearity:} For utility functions $\U_1, \U_2$ and any $\alpha_1, \alpha_2 \in \R$, $\phi_{z_i}(\alpha_{1} \U_{1}+\alpha_{2} \U_{2}) = \alpha_{1} \phi_{z_i}(\U_{1}) + \alpha_{2} \phi_{z_i}(\U_{2})$. 
    \item \textbf{Efficiency:} for every $\U, \sum_{z_i \in D} \phi_{z_i}(\U)=\U(D)$.
\end{enumerate}

In plain words, \textbf{null player} axiom means the Shapley value will assign zero score to data points with no contribution. 
\textbf{Symmetry} axiom requires equal scores assigned to equally impactful data points, ensuring fairness. 
\textbf{Efficiency} axiom requires the sum of contribution scores equal to the total utility, meaning the scores always represent a share of the total utility. 
\textbf{Linearity} axiom means the Shapley value supports additive decomposition across multiple utility functions, allowing for the calculation of contributions to the entire test set by summing the scores of individual test points.

\subsection{Data Shapley and Friends}
\label{appendix:related-work-shapley}

Since its introduction in 2019 \citep{ghorbani2019data, jia2019towards}, Data Shapley has rapidly gained popularity as a principled solution for data attribution. 
Due to the computationally expensive nature of retraining-based Data Shapley, various Monte Carlo-based approximation algorithms have been developed \citep{jia2019towards, illes2019estimation, okhrati2021multilinear, burgess2021approximating, mitchell2022sampling, lin2022measuring, wang2023notegroup, li2023faster, covert2024stochastic}, these methods still necessitate extensive computational resources due to repeated model retraining, which is clearly impractical for modern-sized ML models. 
Many of its variants have been proposed. 
\cite{kwon2022beta} argues that the efficiency axiom is not necessary for many machine learning applications, and the framework of \emph{semivalue} is derived by relaxing the efficiency axiom. \cite{lin2022measuring} provide an alternative justification for semivalue based on causal inference and randomized experiments. 
Based on the framework of semivalue, \cite{kwon2022beta} propose \emph{Beta Shapley}, which is a collection of semivalues that enjoy certain mathematical convenience. \cite{wang2023data} propose \emph{Data Banzhaf}, and show that the Banzhaf value, another famous solution concept from cooperative game theory, achieves more stable valuation results under stochastic learning algorithms. 
\cite{li2024robust} further improves the valuation stability by considering value notions outside the scope of semivalue. 

The classic leave-one-out error is also a semivalue, where the \emph{influence function} \citep{cook1980characterizations, koh2017understanding,   grosse2023studying} is generally considered as its approximation. A concurrent work \citep{choe2024your} leverages a similar gradient decomposition technique as our paper to speed up influence function calculation. However, several works have pointed out the fragility of influence function for deep learning models \citep{basu2020influence, sogaard2021revisiting, bae2022if}, due to the strong assumptions of training convergence and strongly convexity of the loss function. \citet{nguyen2024bayesian} takes a Bayesian view of data attribution and is able to evaluate the variance of LOO. Unlike \citet{nguyen2024bayesian}, our work explicitly incorporates specific training randomness (e.g., model initialization, batch ordering) into the utility function definition, providing attribution scores that reflect contributions to the exact training trajectory.


Another line of works focuses on improving the computational efficiency of Data Shapley by considering K nearest neighbor (KNN) as the surrogate learning algorithm for the original, potentially complicated deep learning models \citep{jia2019efficient, wang2023noteknn,wang2023threshold, wang2024efficient, yang2024inflation}. 
\cite{ghorbani2020distributional, kwon2021efficient, li2023faster} consider Distributional Shapley, a generalization of Data Shapley to data distribution. 
In federated learning setting, \citet{wang2020principled} proposes 
a similar idea of computing the Shapley value for each federated learning round.

\begin{remark}
\label{remark:MC-estimator-violate-fairness}
Randomized Monte Carlo estimators can be inefficient and may produce unstable valuation results, potentially violating fairness axioms of the Shapley value (e.g., the Symmetry axiom) due to inherent approximation errors. In contrast, In-Run Data Shapley does not rely on Monte Carlo estimators. Instead, it computes the exact Shapley value for an approximated utility function via Taylor expansion. This approach adheres to the fairness axioms while ensuring the reliability and consistency of the data valuation results.
\end{remark}

\subsection{Comparison with TracIN \citep{pruthi2020estimating}}
\label{appendix:related-work-TracIN}

The form of first-order In-Run Data Shapley from Section \ref{sec:efficiency-taylor} coincides with the TracIN-Ideal in \cite{pruthi2020estimating}. This provides a new understanding of TracIN-Ideal as an approximation to In-Run Data Shapley. 
Both works face the technical challenge of requiring per-sample gradient computations during a single training run. \cite{pruthi2020estimating} proposes \emph{TracIN-CP}, which mitigates the computational burden by examining only a subset of intermediate checkpoints during training. At each checkpoint, the individual gradients for the entire training set are computed, rather than for a sampled batch, under the assumption that each training example is visited exactly once between checkpoints.
A recent work \citep{xia2024less} leverages TracIN-CP, an approximation algorithm for TracIN-Ideal, for instruction-following data selection.
This approach, however, may deviate significantly from the original TracIN-Ideal, with the final valuation results heavily dependent on the selected checkpoints.
Furthermore, \cite{pruthi2020estimating}'s implementation is limited to the parameters of the last linear layer due to memory constraints, potentially biasing the measurement of data contribution. For instance, \cite{yeh2022first} suggests that the last layer of a neural network might exhibit a strong "cancellation effect," where the data influence of different examples have large, contradictory magnitudes. Additionally, \cite{schioppa2024gradient} demonstrates that selecting different layers can distort data attribution scores. In contrast, this work introduces the "ghost dot-product" technique to efficiently compute the first-order In-Run Data Shapley (i.e., TracIN-Ideal) directly and accurately, without additional approximations.

\newpage

\section{Additional Discussion}
\label{appendix:discussion-retrain-vs-inrun}

In this section, we provide additional discussion about In-Run Data Shapley as well as its comparison with Retraining-based Data Shapley. Figure \ref{fig:procedure} gives a more detailed overview of our algorithm, and Figure \ref{fig:flowchart} provides a visualized comparison between Retraining-based and In-Run Data Shapley.

\begin{figure}[h]
    \centering
    \includegraphics[width=\textwidth]{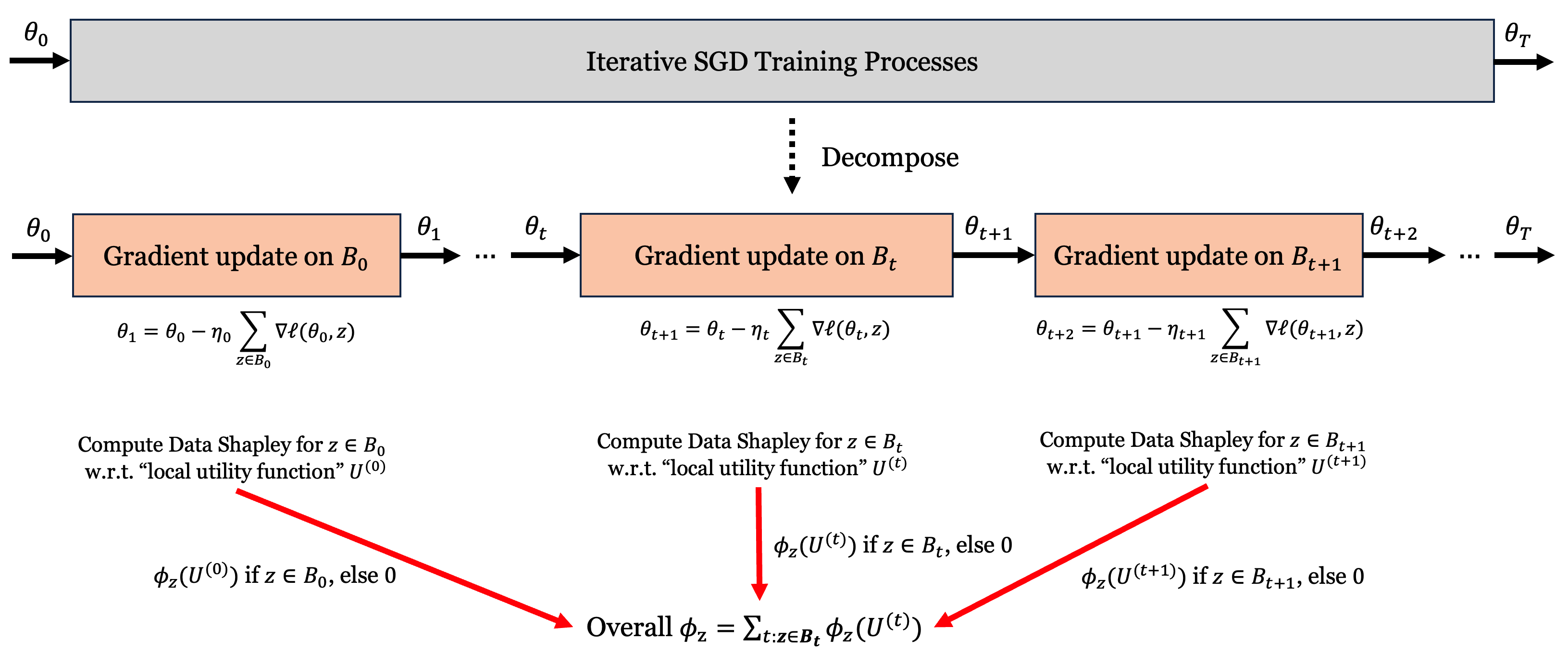}
    \caption{
    The core idea and algorithm overview of In-Run Data Shapley. Rather than evaluating data contribution across the entire training process (top), we decompose it into individual gradient update steps (bottom). For each iteration $t$, we compute the Shapley value $\phi_z(\Ut)$ with respect to a "local utility function" $\Ut$ that measures how the batch $\batch$ contributes to reducing validation loss. A data point's final contribution score is the sum of its Shapley values across all iterations where it appears: $\phi_z = \sum_{t:z\in \batch} \phi_z(\Ut)$. This decomposition approach maintains the Shapley value properties through the linearity axiom while making computation tractable.
    }
    \label{fig:procedure}
\end{figure}

\begin{figure}[h]
    \centering
    \setlength\intextsep{0pt}
    \includegraphics[width=\textwidth]{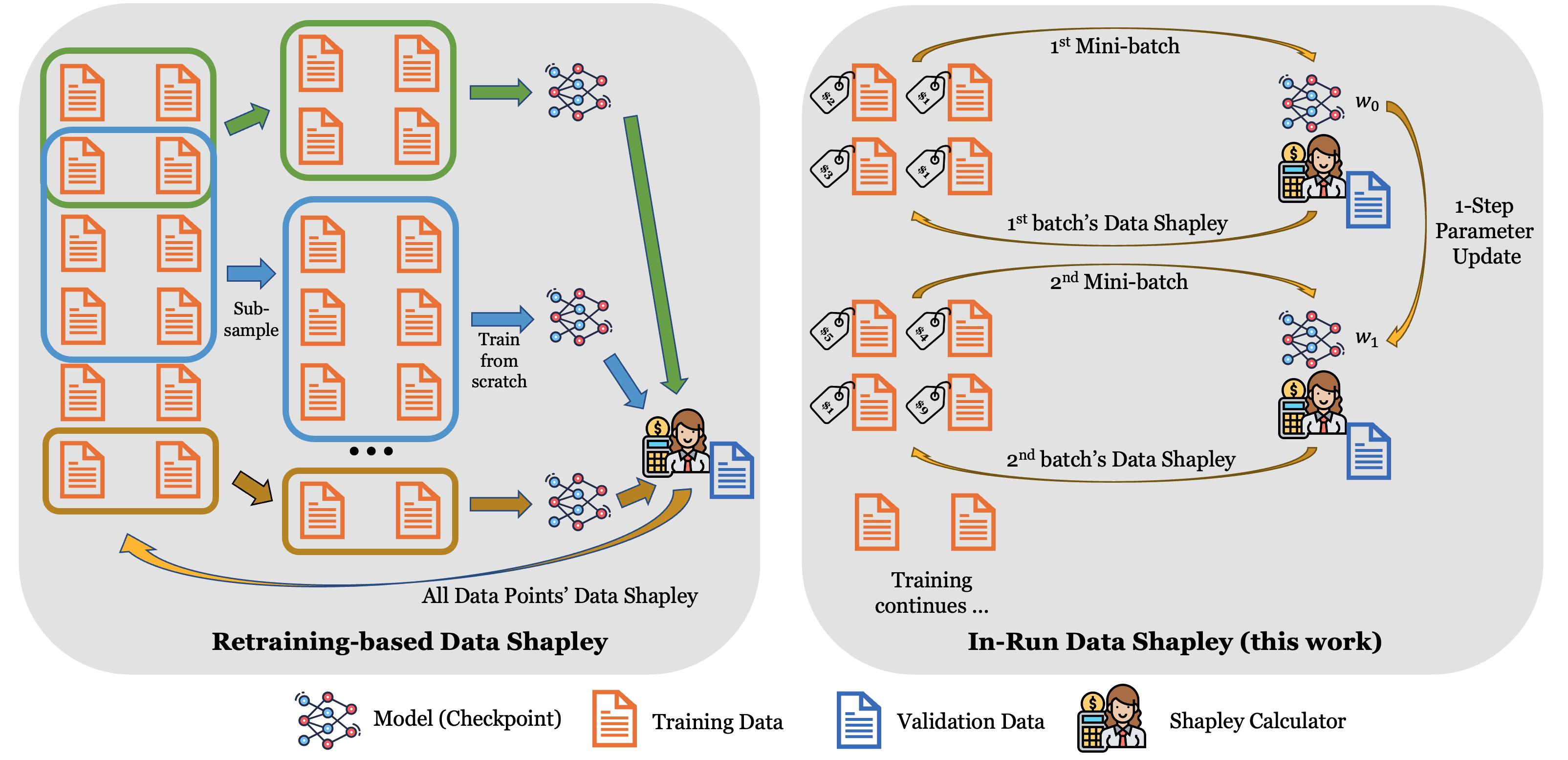}
    \caption{
    Comparison between Retraining-based and In-Run Data Shapley. Retraining-based Data Shapley requires training a model from scratch on all possible subsets of the full training set, which is computationally inefficient and raises concerns about interpretability and stability. In contrast, In-Run Data Shapley acts as a "contribution accountant", efficiently tracking and attributing data value scores to each training example across gradient update steps during a single training run.
    }
    \label{fig:flowchart}
\end{figure}

\subsection{When is In-Run Data Shapley Desirable for Data Attribution?}
\label{appendix:retraining-based-shapley-limitations}

While Retraining-based Data Shapley has been widely adopted in the literature, it suffers from several critical issues that limit its practicality and effectiveness. In this section, we discuss these problems from four key aspects: computational efficiency, alignment with the purpose of data valuation, stability of the valuation results, and the choice of training hyperparameters.

\textbf{(1) Computational burden.} 
Retraining-based Data Shapley calculation is often computationally prohibitive, as it requires retraining the model on every possible subset of the original dataset, leading to a computational complexity that grows exponentially with the size of the dataset. 
Despite the development of various Monte Carlo-based approximation algorithms \citep{jia2019towards, illes2019estimation, okhrati2021multilinear, burgess2021approximating, mitchell2022sampling, lin2022measuring, wang2023notegroup, li2023faster, covert2024stochastic}, these methods still necessitate extensive computational resources due to repeated model retraining, which is clearly impractical for modern-sized ML models. 
Another line of work attempts to use efficient proxy learning algorithms, such as $K$-nearest neighbors (KNN) \citep{jia2019efficient, wang2023noteknn,wang2023threshold, wang2024efficient, yang2024inflation}, to accelerate Data Shapley computation. However, it remains unclear how closely these cheaper proxy models approximate the original learning algorithm, and it is also uncertain how to interpret the derived Data Shapley scores in this context.

\textbf{(2) Retraining-based Data Shapley is unable to assess data contribution to a specific model.} 
Crucially, Retraining-based Data Shapley is not designed to value data contribution towards a specific model. It attempts to quantify the average contribution of each training data point to models trained on different subsets of the data, rather than its contribution to the specific model trained on the full dataset. While one might interpret the former as an approximation of the latter, these two quantities can be quite different in practice, especially when the learning algorithm is randomized and sensitive to factors like random initialization and the order of the data points during training. More importantly, in most real-life scenarios, the primary interest lies in understanding the contribution of each data point to the specific model being trained and deployed.


\textbf{(3) Retraining-based Data Shapley produces unstable valuation results for stochastic training algorithms.} 
Furthermore, when the training algorithm involves randomness, such as in the case of SGD with random mini-batch selection, the corresponding utility function becomes randomized. Prior work \citep{wang2023data} suggests that this randomness can introduce substantial noise into the estimated Shapley values, rendering them unreliable and unstable. This instability poses significant challenges for interpreting and using the resulting data valuations, as the scores may vary considerably across different runs of the algorithm, even on the same dataset. Consequently, this limits the practical applicability of Retraining-based Data Shapley when working with stochastic training algorithms, which are prevalent in modern machine learning. We note that similar vulnerabilities to learning stochasticity have been observed for LOO-based data influence scores (e.g., influence function \citep{koh2017understanding}) in several works \citep{basu2020influence, sogaard2021revisiting, nguyen2024bayesian}.

\textbf{(4) Training hyperparameter choices in Retraining-based Data Shapley are unclear.} In machine learning, training hyperparameters (e.g., learning rate and batch size) typically need to be adjusted based on the size of the training dataset. This creates a \emph{fundamental ambiguity} when computing Data Shapley values: when evaluating the utility $\U(S)$ for different data subsets $S$, should we use the same hyperparameters as those optimized for the full dataset, or should we adjust them based on the subset size? This choice can significantly impact the calculated values.

In-Run Data Shapley addresses all these issues, making it more desirable for specific scenarios. Firstly, it is computationally efficient as it computes data values during the training run, avoiding the need for multiple retraining iterations. This makes it feasible for modern large-scale ML models. Secondly, it aligns better with the purpose of data valuation by assessing the contribution of each data point to the specific model being trained, providing more relevant insights for real-world applications. Lastly, In-Run Data Shapley offers deterministic valuation results even with stochastic training algorithms, as it incorporates the specific sequence and randomness of the training process into the utility function. Therefore, for scenarios where computational efficiency, model-specific contributions, and result stability are critical, In-Run Data Shapley is a more suitable choice.


\clearpage

\begin{remark}[In-run Data Shapley is a model-specific data attribution technique.]
In the original Data Shapley literature, $U(S)$ typically defines utility as the final performance (e.g., accuracy, loss) of a model trained on a data subset $S$. However, things become more complicated in the context of deep learning. A specific deep learning training run is an \emph{iterative process}. Defining utility $U(S)$ for a particular run makes it a \emph{sequence function} rather than the pure \emph{set function} required by the standard Shapley value framework. Extending Shapley axioms to sequence functions is not straightforward. Previous works often define $U(S)$ as "expected model utility across all possible training runs" to circumvent this conceptual issue. 
In this work, we initiate the study of \emph{model-specific data attribution}, which is crucial for applications like model diagnosis and behavior interpretation where we focus on a specific training run rather than expected utility across all possible runs.

\textbf{Data attribution should consider specific application scenario.} From a broader perspective, the "optimal data attribution" notion heavily depends on the intended application. In the literature, data attribution techniques are being used for various purposes such as data quality assessment, data valuation, and interpreting model predictions. For data quality assessment, we require \emph{"algorithm-level data attribution"} that measures a data point's general influence on the learning algorithm, independent of specific training random seeds. On the other hand, when interpreting model decisions, we need model-specific data attribution focused on the particular checkpoint we train. Systematically mapping which data attribution approaches best suit specific application scenarios remains an important direction for future research.
\end{remark}

\subsection{When is Retraining-based Data Shapley Desirable for Data Attribution?}

Retraining-based Data Shapley is still desirable in several scenarios. Firstly, it is more general and applicable to all learning algorithms, whereas In-Run Data Shapley is only applicable to iterative training algorithms. Secondly, retraining-based Shapley is useful when the goal is to understand the contribution of data points to the general learning process rather than to a specific model. 
Thirdly, because Retraining-based Data Shapley does not require modifying the internal code of the training algorithm, its estimation algorithm, typically Monte Carlo-based, is straightforward and clean to implement.

\subsection{Potential Extensions of In-Run Data Shapley to Other Domains}

While this work focuses on data attribution for SGD-trained machine learning models, we believe the core idea of In-Run Data Shapley---decomposing contribution analysis into per-iteration assessments---can potentially be extended to other contexts and applications. Here, we discuss some directions.

\textbf{Data attribution for iterative learning algorithms that do not use gradient descent.} Our framework could potentially extend to iterative learning algorithms that don't use gradient descent, such as k-means clustering or decision tree learning. Though these algorithms update models differently, they still proceed iteratively (e.g., k-means alternates between assignment and update steps; decision trees are built through recursive partitioning). For such algorithms, we could analyze how each data point influences these discrete update steps and aggregate these influences across iterations, similar to our approach with gradient-based training.

\textbf{Hyperparameter importance.} The framework could potentially be adapted to evaluate hyperparameter contributions during training. One possibility is to set a "baseline" hyperparameter value and assess how choosing a different value impacts each training iteration compared to this baseline. For instance, when evaluating learning rate choices, we could measure how using a specific learning rate value affects model updates compared to using a baseline learning rate. For differentiable hyperparameters, we could leverage Taylor expansion to approximate this difference; for non-differentiable ones, zero-order methods could potentially be used. By accumulating these contributions across training iterations, we could understand hyperparameter impact without requiring multiple complete training runs. At a high level, this view unifies our treatment of training data and hyperparameters - both can be seen as choices that influence each training iteration, where we aim to quantify their impact against baseline scenarios. While technical challenges remain in adapting our method, this direction presents an interesting opportunity for future research.

\textbf{Feature attribution (e.g., context-attribution for language models).} Another possible extension we envision is feature attribution for machine learning models. Feature attribution aims at understanding how each part of the input (like individual words in a sentence or pixels in an image) influences a model's final prediction. When a model makes a prediction, the input features are processed through multiple layers, with each layer transforming the information before passing it to the next layer. Feature attribution aims to track how this information flows and transforms through the network to determine each input feature's contribution to the final output. We envision a \emph{unified theoretical framework} connecting training data attribution and feature attribution, drawing parallels between how information flows during training (through gradient updates) and during prediction (through layer operations). This could lead to more efficient methods for explaining model behavior, particularly for complex architectures like transformers.

\newpage

\section{Missing Proofs}
\label{appendix:proofs}


\begin{theorem}[Restate of Theorem \ref{thm:firstorder}]
In-Run Data Shapley considering the first-order approximation has closed-form 
\begin{align*}
\phi_{z}\left( U \right) \approx \sum_{t=0}^{T-1} \phi_{z}\left( \Uone \right)
\end{align*}
where 
\begin{align*}
\phi_{z}\left( \Uone \right) = - \eta_t \g \ell(w_t, \zval) \cdot \g \ell(w_t, z),~~~t = 0, \ldots, T-1
\end{align*}
\end{theorem}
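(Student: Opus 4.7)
The plan is to reduce the statement to the elementary fact that the Shapley value of an additive (linear-in-$S$) utility function coincides with the per-element contribution, and then invoke linearity of the Shapley value (Theorem~\ref{thm:linearity}) to aggregate across iterations.

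First I would start from the global utility $U(S) = \sum_{t=0}^{T-1} U^{(t)}(S)$ and replace each $U^{(t)}$ by its first-order Taylor approximation $\Uone$ around $w_t$. Since $\wtil_{t+1}(S) - w_t = -\eta_t \sum_{z \in S} \g\ell(w_t, z)$, substituting into $\Uone(S) = \g\ell(w_t, \zval)\cdot(\wtil_{t+1}(S)-w_t)$ and distributing the inner product over the sum gives
\begin{equation*}
\Uone(S) = \sum_{z \in S} u_z^{(t)}, \qquad u_z^{(t)} := -\eta_t\, \g\ell(w_t, \zval) \cdot \g\ell(w_t, z).
\end{equation*}
Thus $\Uone$ is an additive set function on $\trainset$ (or, equivalently, on $\batch$ after the augmentation discussed in Section~\ref{sec:Shapley-for-one-run}).

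Next I would compute $\phi_z(\Uone)$ directly. For any $S \subseteq \trainset \setminus \{z\}$, the marginal contribution telescopes: $\Uone(S\cup\{z\}) - \Uone(S) = u_z^{(t)}$, independent of $S$. Plugging this constant into Definition~\ref{def:shapley-value} and using that the binomial weights sum to $1$, we obtain $\phi_z(\Uone) = u_z^{(t)} = -\eta_t\, \g\ell(w_t, \zval)\cdot\g\ell(w_t, z)$. Alternatively, one can write $\Uone = \sum_{z'\in\batch} u_{z'}^{(t)}\cdot\ind[z'\in S]$ and apply the Shapley axioms termwise: each summand is a utility for which every player except $z'$ is a null player, so by Null-player, Symmetry, and Efficiency, the Shapley value of $z'$ on that summand is $u_{z'}^{(t)}$ and of every other player is $0$; Linearity (Theorem~\ref{thm:linearity}) then assembles the claim.

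Finally, applying linearity of the Shapley value once more across iterations yields
\begin{equation*}
\phi_z(U) \approx \phi_z\!\left(\sum_{t=0}^{T-1} \Uone\right) = \sum_{t=0}^{T-1} \phi_z(\Uone),
\end{equation*}
which is the stated formula. I do not anticipate a serious obstacle: the only subtlety worth stating carefully is that the ``$\approx$'' in the conclusion tracks exactly the Taylor truncation error $O(\eta_t^2)$ per iteration (as quoted in Section~\ref{sec:efficiency-taylor}), and that the batch-vs-full-training-set subtlety is handled by the augmentation argument cited after the definition of the local utility function, so that the per-iteration formula applies uniformly to every $z\in\trainset$ (giving $0$ when $z\notin\batch$).
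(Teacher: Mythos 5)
Your argument matches the paper's proof: both observe that $\Uone$ is additive so the marginal contribution $\Uone(S\cup\{z\})-\Uone(S)$ is the constant $-\eta_t\,\g\ell(w_t,\zval)\cdot\g\ell(w_t,z)$ independent of $S$, plug this into Definition~\ref{def:shapley-value}, and then sum over $t$ via linearity. The extra care you take about the batch-vs-full-set augmentation and the Taylor truncation error is consistent with the paper's framing and does not change the substance.
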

\begin{proof}
For notation simplicity, let 
$x_j := \g \ell(w_t, z_j)$. 
Given the utility function 
\begin{align*}
\Uone(S) = - \eta_t \sum_{z_j \in S} \g \ell(w_t, \zval) \cdot \g \ell(w_t, z_j),
\end{align*}
the marginal contribution of $z$ for any $S \subseteq \batch \setminus z$ is
\begin{align*}
\Utwo(S \cup z) - \Utwo(S) 
= - \eta_t \g \ell(w_t, \zval) \cdot \g \ell(w_t, z)
\end{align*}
Plugging in the expression to the Shapley value's definition immediately gives the result. 
\end{proof}

\newpage

\begin{theorem}[Restate of Theorem \ref{thm:secondorder}]
In-Run Data Shapley considering the second-order approximation has closed-form 
\begin{align}
\phi_{z}\left( U \right) \approx \sum_{t=0}^{T-1} \left( \phi_z\left(\Uone\right) + \frac{1}{2} \phi_z\left(\Utwo\right) \right)
\end{align} 
where 
\begin{align}
\phi_z\left(\Uone\right) + \frac{1}{2} \phi_z\left(\Utwo\right) 
= 
\underbrace{
- \eta_t \g \ell(w_t, \zval) \cdot \g \ell(w_t, z)}_{ \text{\circled{1} influence of } z \text{ on the loss of }\zval } 
+ 
\underbrace{
\frac{\eta_t^2}{2} 
\g \ell(w_t, z)^{\tp} \hessian_t  \left(\sum_{z_j \in \batch } \g \ell(w_t, z_j) \right) }_{ \text{\circled{2} interaction between }z\text{ and other training points} }
\end{align}
for any $t = 0, \ldots, T-1$. 
\end{theorem}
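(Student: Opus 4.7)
My plan is to use the linearity axiom of the Shapley value together with a careful pair decomposition of the quadratic form $U^{(2)}$. By Theorem \ref{thm:linearity}, it suffices to compute $\phi_z(U^{(2)})$ separately and add $\frac{1}{2}$ of it to the already-established $\phi_z(U^{(1)})$ from Theorem \ref{thm:firstorder}. Writing $g_i := \g \ell(w_t, z_i)$ for brevity and substituting the SGD update $\wtil_{t+1}(S) - w_t = -\eta_t \sum_{z_i \in S} g_i$, I would first expand
\begin{equation*}
U^{(2)}(S) \;=\; (\wtil_{t+1}(S)-w_t)^{\tp}\, \hessian_t \,(\wtil_{t+1}(S)-w_t)
\;=\; \eta_t^{2} \sum_{z_i \in S}\sum_{z_j \in S} g_i^{\tp} \hessian_t\, g_j,
\end{equation*}
where implicitly $S \subseteq \batch$.

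Next, I would split the double sum into diagonal and off-diagonal pieces. Setting $a_{ij} := g_i^{\tp} \hessian_t g_j$, this gives $U^{(2)}(S) = \eta_t^2 \sum_{z_i \in \batch} a_{ii} \ind[z_i \in S] + \eta_t^2 \sum_{z_i \ne z_j} a_{ij} \ind[z_i, z_j \in S]$. Both of these are utility functions of an especially simple "presence indicator" form, for which the Shapley value is trivially computable: by the Null Player, Symmetry, and Efficiency axioms (Appendix \ref{appendix:shapley-axioms}), the Shapley value of the singleton-indicator utility $\ind[z_i \in S]$ assigns $1$ to $z_i$ and $0$ elsewhere, while the pair-indicator utility $\ind[z_i, z_j \in S]$ ($i \ne j$) assigns $\tfrac{1}{2}$ to each of $z_i, z_j$ and $0$ elsewhere. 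Applying linearity term-by-term yields
\begin{equation*}
\phi_z(U^{(2)}) \;=\; \eta_t^{2} \Bigl( a_{zz} + \tfrac{1}{2}\!\sum_{z_j \in \batch, z_j \ne z} (a_{zj} + a_{jz}) \Bigr).
\end{equation*}

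The final step uses the symmetry of the Hessian $\hessian_t = \g^2 \ell(w_t, \zval)$, so $a_{zj} = a_{jz}$; the inner sum collapses to $\sum_{z_j \in \batch, z_j \ne z} a_{zj}$, and re-absorbing the diagonal term yields $\phi_z(U^{(2)}) = \eta_t^{2}\, g_z^{\tp} \hessian_t \bigl(\sum_{z_j \in \batch} g_j\bigr)$. Multiplying by $\tfrac{1}{2}$ and adding the first-order contribution from Theorem \ref{thm:firstorder} recovers the claimed expression. I do not anticipate a serious obstacle here: the only subtlety is being careful not to double-count the off-diagonal entries (since the sum ranges over ordered pairs $i \ne j$) and remembering that Hessian symmetry is what allows the two directions $a_{zj}$ and $a_{jz}$ to be combined cleanly. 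The entire argument is a direct consequence of linearity plus the two baseline Shapley computations for singleton and pair indicators.
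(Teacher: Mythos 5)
Your proof is correct, but it takes a genuinely different route from the paper's. The paper computes the marginal contribution $U^{(t)}_{(2)}(S \cup \{z\}) - U^{(t)}_{(2)}(S) = \eta_t^2\bigl(2\tilde{x}^\tp \hessian_t \sum_{z_j \in S} x_j + \tilde{x}^\tp \hessian_t \tilde{x}\bigr)$, plugs it into the defining sum for the Shapley value, and then evaluates the resulting double sum using the counting identity $\sum_{S \subseteq \batch\setminus\{z\},\,|S|=k-1}\sum_{z_j\in S} x_j = \binom{n_t-2}{k-2}\sum_{z_j\ne z} x_j$ together with $\frac{1}{n_t}\sum_{k=1}^{n_t}\frac{k-1}{n_t-1} = \frac{1}{2}$. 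You instead expand the quadratic form as a sum of diagonal and off-diagonal monomials, recognize each monomial as a scalar multiple of a unanimity game with carrier of size one or two, read off the Shapley values $1$ and $\frac{1}{2}$ directly from the Null Player, Symmetry, and Efficiency axioms, and finish with linearity and Hessian symmetry. Both are valid; yours is more structural and avoids any combinatorial bookkeeping by invoking the classic basis decomposition of games into unanimity games, whereas the paper's computation is more self-contained but relies on a binomial identity that must be verified. Your handling of the ordered off-diagonal pairs (combining $a_{zj}$ and $a_{jz}$ via symmetry of $\hessian_t$) is the one place where care is needed, and you get it right.
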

\begin{proof}

We show that 
\begin{align*}
\phi_{z}\left(\Utwo\right) = 
\eta_t^2 
\g \ell(w_t, z) \hessian_t 
\left(
\sum_{z_j \in \batch} \g \ell(w_t, z_j) 
\right). 
\end{align*}

For notation simplicity, let 
$x_j := \g \ell(w_t, z_j)$, and $\tilde{x} := \g \ell(w_t, z)$. Furthermore, let $n_t := |\batch|$ the batch size in $t$-th iteration. 
For any $S \subseteq \batch \setminus z$, the marginal contribution of $z$ is 
\begin{align*}
\Utwo(S \cup z) - \Utwo(S) 
= 
\eta_t^2 
&\left(2 \tilde{x} \hessian_t \left( \sum_{z_j \in S} x_j \right) + \tilde{x}^{\tp} \hessian_t \tilde{x}
\right).
\end{align*}

Plug the above expression into the Shapley value's formula, we have
\begin{align*}
\phi_{z}\left( \Utwo \right)
&= 
\eta_t^2 \left(
\tilde{x}^{\tp} \hessian_t \tilde{x}
+ 
\frac{2}{n_t} \sum_{k=1}^{n_t} {n_t-1 \choose k-1}^{-1} \sum_{S \subseteq \batch \setminus \{z\},~|S|=k-1} \left[
\tilde{x}^{\tp} \hessian_t \sum_{z_j \in S} x_j
\right]
\right) \\
&= 
\eta_t^2 \left(
\tilde{x}^{\tp} \hessian_t \tilde{x}
+ 
\tilde{x}^{\tp} \hessian_t \left[
\frac{2}{n_t} \sum_{k=1}^{n_t} {n_t-1 \choose k-1}^{-1} \sum_{S \subseteq \batch \setminus \{z\},~|S|=k-1} \left(\sum_{z_j \in S} x_j \right) \right]
\right) \\
&= 
\eta_t^2 \left(
\tilde{x}^{\tp} \hessian_t \tilde{x}
+ 
\tilde{x}^{\tp} \hessian_t \left[
\frac{2}{n_t} \sum_{k=2}^{n_t} {n_t-1 \choose k-1}^{-1} {n_t-2 \choose k-2} \left(\sum_{z_j \in \batch \setminus \{z\} } x_j \right) \right]
\right) \\
&= 
\eta_t^2 \left(
\tilde{x}^{\tp} \hessian_t \tilde{x}
+ 
\tilde{x}^{\tp} \hessian_t \left[
\frac{2}{n_t} \sum_{k=2}^{n_t} \frac{k-1}{n_t-1} \left(\sum_{z_j \in \batch \setminus \{z\} } x_j \right) \right]
\right) \\
&= 
\eta_t^2 \left(
\tilde{x}^{\tp} \hessian_t \tilde{x}
+ 
\tilde{x}^{\tp} \hessian_t \left(\sum_{z_j \in \batch \setminus \{z\} } x_j \right)
\right) \\
&= \eta_t^2 \left[
\tilde{x}^{\tp} \hessian_t  \left(\sum_{z_j \in \batch } x_j \right)\right]
\end{align*}
\end{proof}

\newpage

\section{Technical Details}
\label{appendix:efficiency}

\textbf{Notation review.} 
Consider a linear layer $\bs = \ba \bW$, where $\bW \in \R^{d_1 \times d_2}$ is the weight matrix, $\ba = (\ba^{(1)}, \ldots, \ba^{(B)})^\tp$ is the mini-batch input, and $\bs = (\bs^{(1)}, \ldots, \bs^{(B)})^{\tp}$ is the output (i.e., the pre-activation tensor). For non-sequential data, $\ba \in \R^{B \times d_1}, \bs \in \R^{B \times d_2}$. 
For sequential data with sequence length $T$, $\ba \in \R^{B \times d_1 \times T}, \bs \in \R^{B \times d_2 \times T}$. 
Let $\ell^{(i)} := \ell(w, z_i)$ denote the current model's individual loss on $z_i$. For notation convenience, we denote 
$
\bb^{(i)} := \frac{\del \ell^{(i)}}{\del \bs}
$. 


\subsection{Ghost Dot-Product}
\label{appendix:efficiency-linear}

By applying the chain rule, we can express the gradient of an individual loss $\ell^{(i)} := \ell(w, z_i)$ with respect to $\bW$ as
\begin{align}
\frac{\del \ell^{(i)}}{\del \bW} =
\frac{\del \ell^{(i)}}{\del \bs^{(i)}}
\frac{\del \bs^{(i)}}{\del \bW}
= \frac{\del \ell^{(i)} }{\del \bs^{(i)}} \ba^{(i)}
= \frac{\del \ell }{\del \bs^{(i)}} \ba^{(i)}
= \ba^{(i)} \outerprod \bb^{(i)}
\label{eq:linear-decompose}
\end{align}
where $\ell := \sum_{j=1}^B \ell^{(j)}$ is the aggregated loss. Note that the individual's output gradient $\bb^{(i)} = \frac{\partial \ell^{(i)}}{\partial \bs^{(i)}} = \frac{\partial \ell}{\partial \bs^{(i)}}$ is readily available during the backpropagation pass. 

Suppose we are interested in computing the gradient dot-product $\frac{\del \ell^{(1)}}{\del \bW} \odot \frac{\del \ell^{(2)}}{\del \bW}$ between two data points $z_1, z_2$ in the same batch in the backpropagation. We first discuss the case for non-sequential data and then extend it to sequential data. 

\textbf{Non-sequential data.} 
For non-sequential data, we have each $\ba^{(i)} \in \R^{d_1 \times 1}$ and $\bb^{(i)} \in \R^{1 \times d_2}$. 
By (\ref{eq:linear-decompose}), we have 
\begin{align*}
\frac{\del \ell^{(1)}}{\del \bW}
\odot \frac{\del \ell^{(2)}}{\del \bW}
= 
\left( \ba^{(1)} \outerprod \bb^{(1)} \right) \odot 
\left( \ba^{(2)} \outerprod \bb^{(2)} \right)
=
\left(\left(\bb^{(1)}\right)^{\tp} \left(\bb^{(2)}\right) \right)
\left( \left( \ba^{(1)} \right)^\tp \ba^{(2)} \right)
\end{align*}
Hence, we can compute the dot-product between $\frac{\del \ell^{(1)}}{\del \bW}$ and $\frac{\del \ell^{(2)}}{\del \bW}$ without actually instantiating the gradient vector $\frac{\del \ell^{(1)}}{\del \bW}$ or $\frac{\del \ell^{(2)}}{\del \bW}$. 
We can take the dot products between $\left(\bb^{(1)}\right)^{\tp} \left(\bb^{(2)}\right)$ and $\left( \ba^{(1)} \right)^\tp \ba^{(2)}$, and then multiply the results together. 
Moreover, all of the materials $\ba^{(1)}, \ba^{(2)}, \bb^{(1)}, \bb^{(2)}$ that are required for computation are all already available in one backpropagation. 
Hence, with a single backpropagation, we can efficiently compute the gradient dot-product between every pair of data points within the batch. 

\textbf{Sequential data.} 
For sequential data, we have each $\ba^{(i)} \in \R^{d_1 \times T}$ and $\bb^{(i)} \in \R^{T \times d_2}$. 
By (\ref{eq:linear-decompose}), we have 
\begin{align*}
\frac{\del \ell^{(1)}}{\del \bW}
\odot \frac{\del \ell^{(2)}}{\del \bW}
= 
\left( \ba^{(1)} \outerprod \bb^{(1)} \right) \odot 
\left( \ba^{(2)} \outerprod \bb^{(2)} \right) 
&= \sum_{j, k = 1}^{d_1, d_2} \left( \ba^{(1)} \outerprod \bb^{(1)} \right)_{j, k} \left( \ba^{(2)} \outerprod \bb^{(2)} \right)_{j, k} \\
&= \sum_{j, k = 1}^{d_1, d_2} \left( \ba^{(1)}_{j} \bb^{(1)}_k \right) \left( \ba^{(2)}_{j} \bb^{(2)}_{k} \right) \\
&= \sum_{j, k = 1}^{d_1, d_2} \left( \sum_{t=1}^T \ba^{(1)}_{jt} \bb^{(1)}_{tk} \right) \left( \sum_{t=1}^T \ba^{(2)}_{jt} \bb^{(2)}_{tk} \right) \\
&= \sum_{t_1, t_2 = 1}^{T, T} \left(
\sum_{j=1}^{d_1} \ba^{(1)}_{j t_1} \ba^{(2)}_{j t_2} 
\right) \left(
\sum_{k=1}^{d_2} \ba^{(1)}_{k t_1} \ba^{(2)}_{k t_2} 
\right) \\
&= \sum_{t_1, t_2 = 1}^{T, T} \left( \ba^{(1)}_{\cdot, t_1} \ba^{(2)}_{\cdot, t_2} \right) \left( \bb^{(1)}_{\cdot, t_1} \bb^{(2)}_{\cdot, t_2} \right) \\
&=
\left(\left(\bb^{(1)}\right) \left(\bb^{(2)}\right)^{\tp} \right) \odot \left( \left( \ba^{(1)} \right)^\tp \ba^{(2)} \right)
\end{align*}

Hence, comparing with directly computing per-sample gradients, if $2T^2 < d_1 d_2$, it is more memory-efficient to first multiply the matrices of $\left(\bb^{(1)}\right) \left(\bb^{(2)}\right)^{\tp}$ and $\left( \ba^{(1)} \right)^\tp \ba^{(2)}$, then take the inner product between the two $T \times T$ matrices. If $2 T^2 \ge d_1 d_2$, then we can first take the outer products $\ba^{(1)} \outerprod \bb^{(1)}$ and $\ba^{(2)} \outerprod \bb^{(2)}$, then take their inner product. In either case, we only need a single backpropagation to compute the gradient dot product between every pair of data points within the batch, similar to the case of non-sequential data.

\begin{remark}[Applications of "Ghost" Technique Beyond Data Attribution]
The techniques developed in this work for efficiently computing gradient dot products may be of independent interest, as this operation arises in various machine learning algorithms and applications beyond data attribution. For instance, the "ghost dot-product" technique can be used in calculating the cosine similarity between individual gradients (together with the "ghost clipping" technique), which is frequently being employed to analyze training dynamics \citep{fort2019emergent} and detect adversarial attacks \citep{dhaliwal2018gradient}. Additionally, gradient similarity is used in active learning to select the most informative data points for labeling \citep{ash2019deep} and in multitask learning to balance the contributions of different tasks \citep{yu2020gradient}.
\end{remark}

\clearpage

\subsection{Ghost Gradient-Hessian-Gradient Product}
\label{appendix:efficiency-HVP}


\newcommand{\nlayer}{L}

We denote $\nlayer$ as the number of layers in a neural network, and we denote $\bW_i$, $i = 1, \ldots, \nlayer$ as the weight parameters of each layer. We denote $\hessian_{\bW_i}$ as the Hessian matrix of $\ell^{(\zval)}$ on layer $\bW_i$. Suppose we are interested in computing the gradient dot product $\g \ell^{(1)} \hessian \g \ell^{(2)}$. 
\begin{align*}
\g \ell^{(1)} \hessian \g \ell^{(2)}
&= \sum_{i, j=1}^{\nlayer, \nlayer} \frac{\del \ell^{(1)}}{ \del \bW_i } 
\left( \frac{\del^{(2)} \ell^{(\zval)}}{\del \bW_i \del \bW_j} \right)
\frac{\del \ell^{(2)}}{ \del \bW_j } \\
&= \sum_{i, j=1}^{\nlayer, \nlayer} ( \baone_i \outerprod \bbone_i ) 
\left( \frac{\del^{(2)} \ell^{(\zval)}}{\del \bW_i \del \bW_j} \right) 
( \batwo_j \outerprod \bbtwo_j )
\end{align*}

We first look at the Hessian-vector product. 
\begin{align*}
\left( \frac{\del^{(2)} \ell^{(\zval)}}{\del \bW_i \del \bW_j} \right) 
( \batwo_j \outerprod \bbtwo_j )
&= 
\frac{\del }{\del \bW_i}\left[
\frac{\del \ell^{(\zval)}}{\del \bW_j} ( \batwo_j \outerprod \bbtwo_j )
\right] \\
&= \frac{\del }{\del \bW_i}\left[
(\baval_j \outerprod \bbval_j) ( \batwo_j \outerprod \bbtwo_j )
\right] \\
&= \frac{\del }{\del \bW_i}\left[
(\baval_j \batwo_j) ( \bbval_j \bbtwo_j )
\right] \\
&= \frac{\del }{\del \bW_i}\left[\bc_j \bd_j \right] \\
&= \bc_j \frac{\del \bd_j}{\del \bW_i} + \bd_j \frac{\del \bc_j}{\del \bW_i}
\end{align*}
where we denote $\bc_j := \baval_j \batwo_j$ and $\bd_j := \bbval_j \bbtwo_j$. 

We run the second backpropagation on the gradient dot product $\g \ellval \g \elltwo = \sum_{j=1}^\nlayer \bc_j \bd_j$. Note that here $\batwo_j, \bbtwo_j$ are constant vectors. 
For $\frac{\del \bc_j}{\del \bW_i}$, we have the similar decomposition
\begin{align*}
\frac{\del \bc_j}{\del \bW_i}
&= 
\frac{\del}{\del \bW_i} \left( \baval_j \batwo_j \right) \\
&= 
\frac{\del}{\del \bs^{(\zval)}_i} \left( \baval_j \batwo_j \right) \outerprod \baval_i
\end{align*}

For $\frac{\del \bd_j}{\del \bW_i}$, the derivative is more complicated as the $\bbval_j$ also depends on the output from deeper layers. 
When $j \ge i$, we still have 
\begin{align*}
    \frac{\del \bd_j}{\del \bW_i} = \frac{\del}{\del \bs^{(\zval)}_i} \left( \bbval_j \bbtwo_j \right) \outerprod \baval_i
\end{align*}

When $j < i$, we have 
\begin{align*}
\frac{\del \bd_j}{\del \bW_i} 
&= \frac{\del \bbval_j \bbtwo_j}{\del \bW_i} \\
&= \frac{\del}{\del \bW_i} \left( \bbval_j \bbtwo_j \right) \\
&= \frac{\del}{\del \bW_i} \left( 
\frac{\del \ellval}{\del \bsval_i} \frac{\del \bsval_i}{\del \baval_i} \frac{\del \baval_i}{\del \bsval_{i-1}}
\ldots 
\frac{\del \baval_j}{\del \bsval_j}
\bbtwo_j \right) \\
&= \frac{\del}{\del \bW_i} \left( 
\frac{\del \ellval}{\del \bsval_i} \bW_i \frac{\del \baval_i}{\del \bsval_{i-1}}
\ldots 
\frac{\del \baval_j}{\del \bsval_j}
\bbtwo_j \right) \\
&= \frac{\del}{\del \bsval_i} \left( 
\frac{\del \ellval}{\del \bsval_i} \bW_i \frac{\del \baval_i}{\del \bsval_{i-1}}
\ldots 
\frac{\del \baval_j}{\del \bsval_j}
\bbtwo_j \right) \outerprod \baval_i \\
&~~+ \frac{\del \ellval}{\del \bsval_i} \outerprod \left( \frac{\del \baval_i}{\del \bsval_{i-1}}
\ldots 
\frac{\del \baval_j}{\del \bsval_j}
\bbtwo_j \right) \\
&= \frac{\del}{\del \bs^{(\zval)}_i} ( \bbval_j \bbtwo_j ) \outerprod \baval_i + \bbval_i \outerprod \left( \frac{\del \baval_i}{\del \bsval_j} \bbtwo_j \right)
\end{align*}

Overall, 
\begin{align*}
\frac{\del \bd_j}{\del \bW_i} 
&= \frac{\del}{\del \bW_i} ( \bbval_j \bbtwo_j ) \\
&= 
\begin{cases}
    \frac{\del}{\del \bs^{(\zval)}_i} \left( \bbval_j \bbtwo_j \right) \outerprod \baval_i & j \ge i \\
    \frac{\del}{\del \bs^{(\zval)}_i} \left( \bbval_j \bbtwo_j \right) \outerprod \baval_i + \bbval_i \outerprod \left( \frac{\del \baval_i}{\del \bsval_j} \bbtwo_j \right)
    & j < i \\
\end{cases}
\end{align*}

Hence, we have 
\begin{align*}
&\g \ell^{(1)} \hessian \g \ell^{(2)} \\
&= 
\sum_{i, j=1}^{\nlayer, \nlayer} ( \baone_i \outerprod \bbone_i ) \left[ \bc_j \frac{\del \bd_j}{\del \bW_i} + \bd_j \frac{\del \bc_j}{\del \bW_i} \right] \\
&= \underbrace{
\sum_{i, j=1}^{\nlayer, \nlayer} ( \baone_i \outerprod \bbone_i ) \left(\frac{\del}{\del \bs^{(\zval)}_i} \left( \bc_j \bd_j \right) \outerprod \baval_i \right) }_{\circled{1}}
+ 
\underbrace{
\sum_{i>j}^{\nlayer, \nlayer}
( \baone_i \outerprod \bbone_i ) 
\left( \bbval_i \outerprod 
\left( \frac{\del \baval_i}{\del \bsval_j} \bbtwo_j \right) \right) }_{\circled{2}}
\end{align*}

For the first part, we have 
\begin{align*}
\circled{1} &= 
\sum_{i=1}^{\nlayer} ( \baone_i \outerprod \bbone_i ) \left(\frac{\del}{\del \bsval_i} \left( \sum_{j=1}^{\nlayer} \bc_j \bd_j \right) \outerprod \baval_i \right) \\
&= \sum_{i=1}^{\nlayer} \left(\baone_i \baval_i\right) \left( \bbone_i \left( \frac{\del}{\del \bsval_i} \left( \sum_{j=1}^{\nlayer} \bc_j \bd_j \right) \right) \right)
\end{align*}
where $\frac{\del}{\del \bsval_i} \left( \sum_{j=1}^{\nlayer} \bc_j \bd_j \right)$, $i = 1, \ldots, L$ is readily available from the second backpropagation. 

For the second part, we have 
\begin{align*}
\circled{2} &= \sum_{i>j}^{\nlayer, \nlayer}
( \baone_i \outerprod \bbone_i ) 
\left( \bbval_i \outerprod 
\left( \frac{\del \baval_i}{\del \bsval_j} \bbtwo_j \right) \right) \\
&= \sum_{i>j}^{\nlayer, \nlayer}
\left( \baone_i \left( \frac{\del \baval_i}{\del \bsval_j} \bbtwo_j \right) \right) 
\left(\bbone_i \bbval_i\right)
\end{align*}

The remaining question is how to compute 
$
f(i, j) := \baone_i \frac{\del \baval_i}{\del \bsval_j} \bbtwo_j $. 
Observe that 
\begin{align*}
\baone_i \frac{\del \baval_i}{\del \bsval_j} \bbtwo_j
&= 
\baone_i 
\frac{\del \baval_i}{\del \bsval_{i-1}} 
\frac{\del \bsval_{i-1}}{\del \baval_{i-1}}
\ldots
\frac{\del \bsval_{j+1}}{\del \baval_{j+1}}
\frac{\del \baval_{j+1}}{\del \bsval_{j}}
\bbtwo_j \\
&= \baone_i 
\frac{\del \baval_i}{\del \bsval_{i-1}} 
\bW_{i-1}
\ldots
\bW_{j+1}
\frac{\del \baval_{j+1}}{\del \bsval_{j}}
\bbtwo_j \\
\end{align*}

Moreover, $\frac{\del \baval_{j+1}}{\del \bsval_{j}}$ is easy to compute as the only operation between $\baval_{j+1}$ and $\bsval_{j}$ is an element-wise activation function. For example, if the activation function is ReLU, then $\frac{\del \baval_{j+1}}{\del \bsval_{j}}$ will simply be a matrix that contains elements in $\{0, 1\}$, depending on the sign of the position in $\bsval_{j}$. 

\newcommand{\bg}{\mathbf{g}}

To efficiently compute $f(i, j)$ for all $i > j$, for each $j = 1, \ldots, \nlayer$, we can maintain the quantity 
$$
\bg(i, j) := 
\frac{\del \baval_i}{\del \bsval_{i-1}} 
\bW_{i-1}
\ldots
\bW_{j+1}
\frac{\del \baval_{j+1}}{\del \bsval_{j}}
\bbtwo_j
$$
and we have $f(i, j) = \baone_i \bg(i, j)$. 
To compute $f(i+1, j)$, note that 
$
\bg(i+1, j) = \frac{\del \baval_{i+1}}{\del \bsval_{i}} \bW_i \bg(i, j)
$. 
Moreover, we note that in our scenario, we are interested in the gradient dot-product 
\begin{align*}
\g \ell^{(1)} \hessian \left( \sum_{m=1}^{B} \g \ell^{(m)} \right)    
\end{align*}
where the Hessian-vector product $\hessian \left( \sum_{m=1}^{B} \g \ell^{(m)} \right)$ is fixed for any training point corresponds to $\ell^{(1)}$. Therefore, the second backpropagation only needs to be taken once as the gradient vector on the right-hand side is always fixed.

\subsection{Merging Batch Selection and Gradient Update in One Backpropagation}
\label{appendix:efficiency-bookkeeping}

By utilizing the ghost Dot-Product technique developed in this paper, we can calculate or approximate all importance scores and correction terms in a single backpropagation pass, without materializing any model-sized vectors. To compute the gradient dot-product between each training point \(z_i \in \batch\) and the validation data \(\zval\), we propose including \(\zval\) in the backpropagation along with the training batch. Specifically, we can backpropagate with respect to $\left(\sum_{z_i \in \batch} \ell^{(i)}\right) + \ell^{(\zval)}$. 
After computing the In-Run Data Shapley scores for the current iteration, it may seem necessary to backpropagate with respect to \(\sum_{z_i \in \batch} \ell^{(i)}\) to compute the gradient for the parameter update. However, this is not required. We can simply reuse the output gradient \(\frac{\partial \ell^{(i)}}{\partial \bs^{(i)}}\) from the original backpropagation and aggregate the gradients for all selected data points. This technique is adapted from the "book-keeping trick" proposed in \cite{bu2023differentially}.

\newpage

\section{Evaluation}
\label{appendix:eval}

\subsection{Data Preprocessing \& Training Settings \& Implementation Details}
We conduct the pretraining experiment using the GPT2 (124M) model~\citep{radford2019language} and Pythia-410M \citep{biderman2023pythia} on the Pile dataset. Our codebase is adapted from \url{https://github.com/karpathy/nanoGPT/tree/master}. 
We first tokenize and split the entire dataset into chunks, and store them in the disk in numpy array format, which significantly speeds up data loading. 
The maximum sequence length is set to 1024. 
The learning rate is set at a maximum of 0.0006, with a minimum learning rate of 0.00006. 
We use AdamW as the optimizer with a weight decay of 0.1, and beta values set to 0.9 and 0.95. Gradients are clipped at a maximum value of 1.0 to maintain stability during training. The batch size is set to 16, with a learning rate warmup of 2000 iterations. Due to the shortage of computation resources, we stop the training at 500,000 iterations. 

\textbf{Methods for comparison.} 
In Section \ref{sec:eval}, we compare In-Run Data Shapley with the influence function \citep{koh2017understanding}, which approximates the change in the model's loss on the test example when the training example is removed from the training set (i.e., the leave-one-out score). 
BM25 \citep{robertson2009probabilistic} featurizes examples by their word frequency statistics to rank the training instances. We measure the similarity between the query examples and the training corpus, using the similarity scores as the value scores for the training corpus. we set the hyperparameters to $k_1=1.5, b=0.75$, which are commonly used values that balance the term frequency scaling and document length normalization. 


\begin{remark}
\label{remark:why-not-compare-with-trak}
We do not compare with TRAK \citep{park2023trak} as it generally requires aggregating the estimator across multiple trained models for a reasonable performance, and the original codebase is not applicable to Huggingface models. In their original paper, the largest scale experiment is fine-tuning BERT-base. 
\end{remark}




\subsection{Additional Fidelity Evaluation}

\subsubsection{Approximation Error for In-Run Data Shapley with larger learning rates}
\label{appendix:fidelity-larger}

To further investigate the impact of learning rate on the approximation quality of In-Run Data Shapley, we conducted additional experiments to Section \ref{sec:eval-fidelity} with larger learning rates of $6 \times 10^{-4}$ and $5 \times 10^{-3}$, where the latter being significantly higher than typical learning rates used in foundation model pretraining. Figure \ref{fig:fidelity-LR6e-4} shows the comparison between Monte Carlo-estimated In-Run Data Shapley and our approximations with learning rate $\eta = 6 \times 10^{-4}$. Both first- and second-order approximations achieve remarkably high accuracy with RMSE $= 0.0012$ and Spearman correlation $= 0.99$. Even with an extremely large learning rate of $\eta = 5 \times 10^{-3}$ (Figure \ref{fig:fidelity-LR5e-3}), our approximations maintain reasonable accuracy with RMSE $\approx 0.0017$ and Spearman correlation $= 0.79$. The strong rank correlation, particularly at learning rates typical for model pretraining ($\eta \leq 10^{-3}$), suggests that our approximations effectively preserve the relative importance of training data points. This property is crucial for practical applications such as identifying low-quality data or selecting valuable training examples, where relative rankings are more important than absolute values.

\begin{figure}[h]
    \centering
    \setlength\intextsep{0pt}
    \setlength\abovecaptionskip{0pt}
    \setlength\belowcaptionskip{-10pt}
    \centering
    \includegraphics[width=0.8\textwidth]{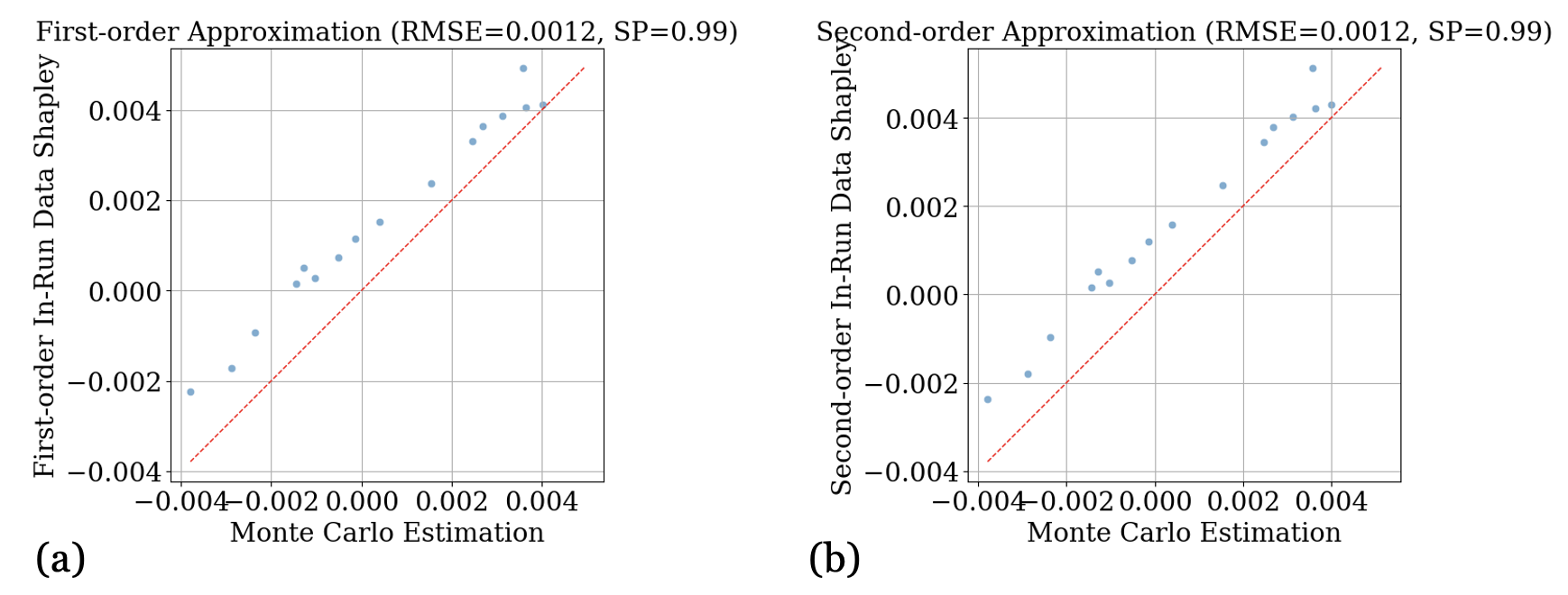}
    \caption{  Comparison between the Monte Carlo-estimated In-Run Data Shapley and First/Second-order In-Run Data Shapley \rebuttal{(learning rate $=6\times 10^{-4}$)}. 
    }
    \label{fig:fidelity-LR6e-4}
\end{figure}

\begin{figure}[h]
    \centering
    \setlength\intextsep{0pt}
    \setlength\abovecaptionskip{0pt}
    \setlength\belowcaptionskip{-10pt}
    \centering
    \includegraphics[width=0.8\textwidth]{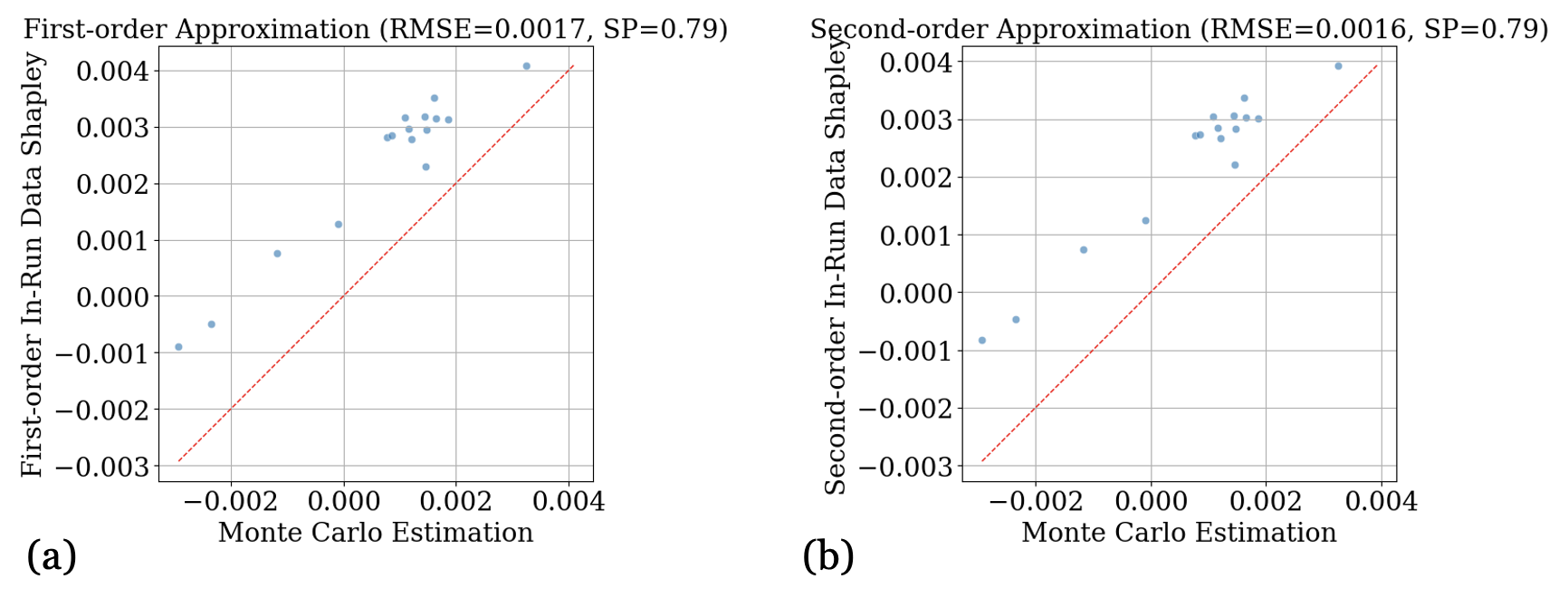}
    \caption{  Comparison between the Monte Carlo-estimated In-Run Data Shapley and First/Second-order In-Run Data Shapley (learning rate $=5\times 10^{-3}$). 
    }
    \label{fig:fidelity-LR5e-3}
\end{figure}

\subsubsection{Approximation Error for Utility Function}
\label{appendix:eval-error-taylor}

In this experiment, we empirically investigate the errors of the first- and second-order approximations to \( \Ut \) on GPT2, as proposed in Section \ref{sec:efficiency-taylor}. 
The correlations are shown in Figure \ref{fig:utility_approx}. As we can see, even the first-order approximation achieves a high approximation accuracy, and the inclusion of the second-order interaction term does not provide a notable improvement in accuracy.

\begin{figure}[H]
    \centering
    \setlength\intextsep{0pt}
    \setlength\abovecaptionskip{0pt}
    \setlength\belowcaptionskip{-10pt}
    \centering
    \includegraphics[width=0.8\textwidth]{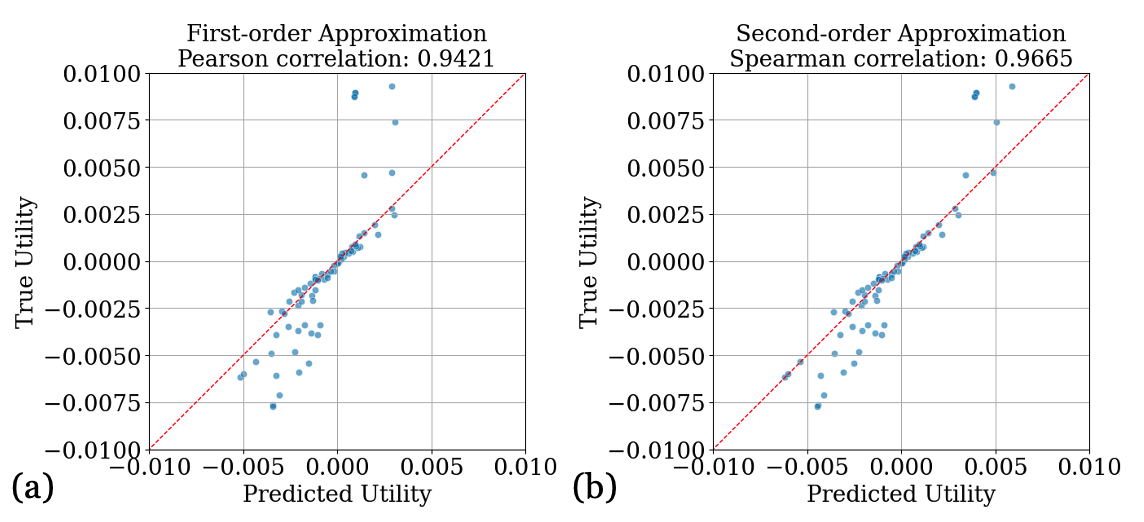}
    \caption{
    \textbf{(a)} We show the correlation between the ground-truth model validation loss change in one gradient update iteration $\Ut(S; \zval) := \ell(\wtil_{t+1}(S), \zval) - \ell(w_{t}, \zval)$ and the first-order Taylor approximation. 
    \textbf{(b)} We show the correlation between $\Ut(S; \zval)$ and the second-order approximation. 
    }
    \label{fig:utility_approx}
\end{figure}

\begin{remark}[Difficulty in Extending to Adam]
The techniques developed in this section are specifically tailored for SGD. It is not directly extendable to other popular optimizers like Adam due to their normalization terms. Nonetheless, using SGD as a proxy for Adam allows for efficient data attribution, which is the approach we adopted in practice and has proved to be effective in our experiment. This provides a practical and effective solution for the current scope of our work. Extending these techniques to support Adam and similar optimizers remains an exciting direction for future research. 
\end{remark}

\subsection{Details about Section \ref{sec:eval-dataquality}}
\label{appendix:eval-dataquality}

In this experiment, we first conduct one training run for 20,000 iterations. Among all the corpus that has been used in the training, we compute their In-Run Data Shapley and filter out all corpus among this subset that has negative contributions to model training. After filtering out the 16\% negative valued corpus, we train another model on the remaining dataset for 10,000 iterations with all hyperparameters staying the same.

Despite the Pile dataset undergoing multiple layers of data curation \citep{gao2020pile}, we still find a significant amount of poor-quality corpus through In-Run Data Shapley. For example, we discovered a large amount of corpora in the PubMed Central domain that are simply empty spaces. 
Figure \ref{fig:badexample} provides two examples of the corpus that received the lowest In-Run Data Shapley scores. Additionally, we found several corpora with negative values that appear normal, which is likely due to a distribution shift with the validation corpus. Crafting a representative validation set is an important direction for future work.



\begin{figure}[h]
    \centering
    \setlength\intextsep{0pt}

    \begin{minipage}{0.48\textwidth}
    \begin{tcolorbox}[title=\textbf{Corpus with meaningless numbers}, fontupper=\small, left=2pt, right=2pt, bottom=1pt, top=1pt, lowvaluebox]
    \input{corpus/bad_1}
    \end{tcolorbox}    
    \end{minipage}
    \hfill
    \begin{minipage}{0.48\textwidth}
    \includegraphics[width=\textwidth]{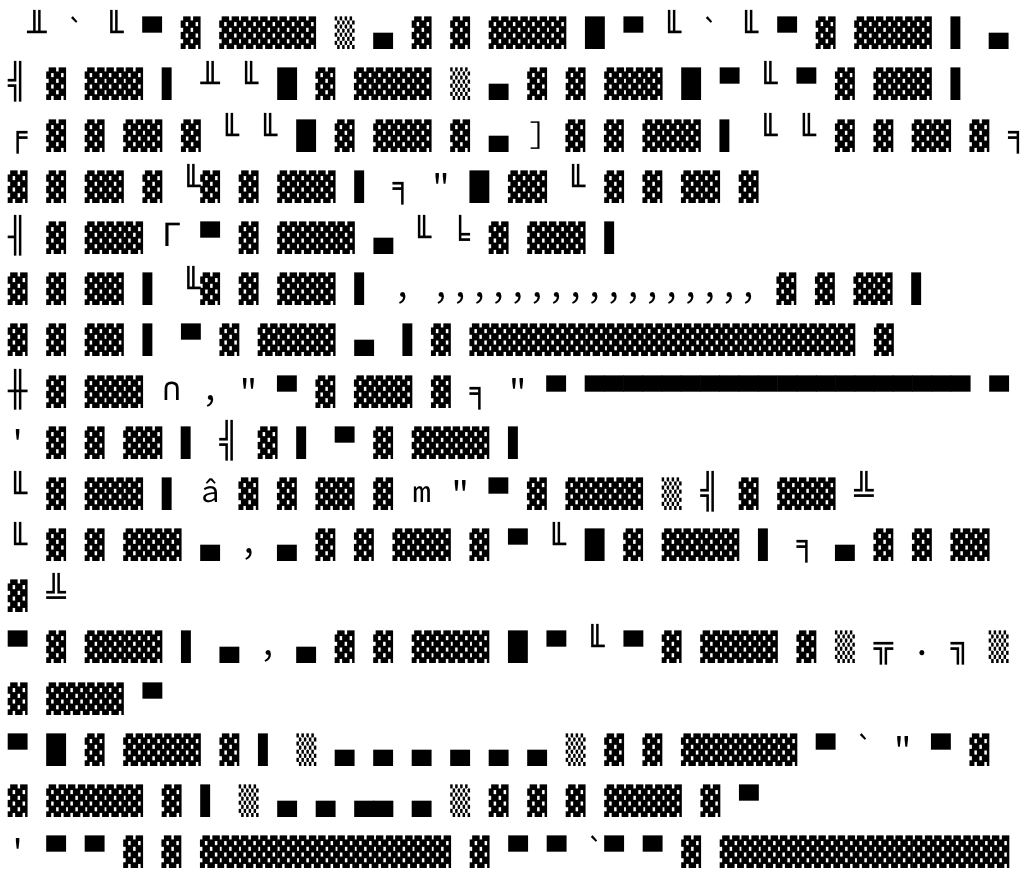}
    \end{minipage}  
    \caption{
    Examples of low-quality corpus in Pile found by In-Run Data Shapley. Left: a corpus with meaningless numbers. 
    Right: a corpus with meaningless symbols (likely due to errors from web crawling.)
    }
    \label{fig:badexample}
\end{figure}

\subsubsection{Data selection performance on Pythia 410M}
\label{appendix:rebuttal-exp-pythia}

Figure \ref{fig:dataselection-pythia410M} shows a performance comparison between the original training run and the model trained on the cleaned subsets on Pythia-410M. Similar to the results on GPT2, we can see that removing lower-quality data leads to a significantly faster drop in test loss compared to the original training run, highlighting the effectiveness of In-Run Data Shapley across different architectures. 

\begin{figure}[h]
    \centering
    \setlength\intextsep{0pt}
    \centering
    \includegraphics[width=0.5\textwidth]{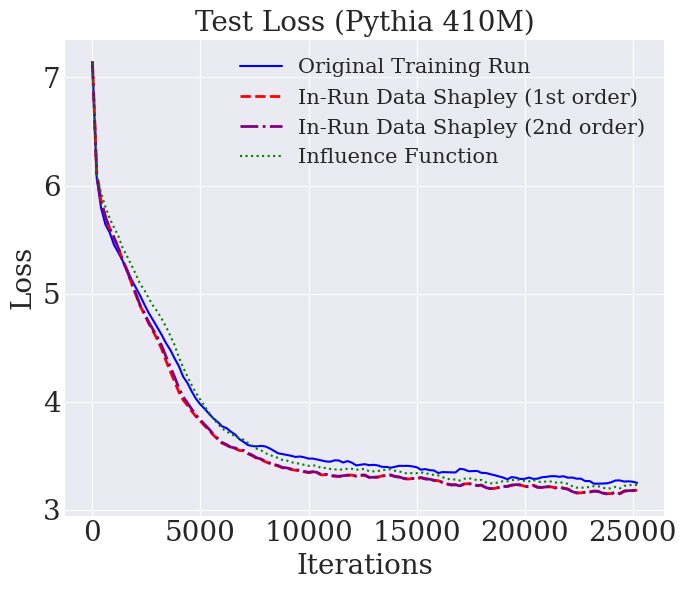}
    \caption{
    Test loss comparison between the original training run and model trained on the cleaned subset according to different data attribution techniques for Pythia-410M.
    }
    \label{fig:dataselection-pythia410M}
\end{figure}

\subsection{Additional Results for Section \ref{sec:eval-stage}}
\label{appendix:eval-stage}

In addition to Figure \ref{fig:domain-value-composition}, we also plot the average value scores of corpora from each domain, averaged across all corpora within the same domain that has been used in gradient updates, over the first 10,000 gradient update iterations. As training progresses, the magnitude of data value scores converges towards zero, which aligns with the diminishing returns property of neural network training. This indicates that the contribution of data points depends on the order in which they are used during training, an aspect that Retraining-based Data Shapley methods cannot capture. Data points introduced in later stages of training contribute less to model performance improvement. 

\begin{figure}[h]
    \centering
    \setlength\intextsep{0pt}
    \centering
    \includegraphics[width=0.5\columnwidth]{images/average_topic_values.pdf}
    \caption{
    The change of average contribution for data points from each domain to a corpus of math text (same setting as Figure \ref{fig:domain-value-composition}).
    }
    \label{fig:domain-value-average}
\end{figure}


\subsection{Details about Section \ref{sec:eval-copyright}}
\label{appendix:eval-copyright}

In Section \ref{sec:eval-copyright}, we use GPT-4 to generate varying levels of paraphrased versions of an original corpus from the training set, categorized as "Partial exactly the same," "Paraphrase," "Significant paraphrase," and "Similar topic." For "Partial exactly the same," the corpus is used as the answer in a creative writing instruction-answer pair. The specific prompt for prompting GPT-4 with examples of the full dialogue are shown in Figure \ref{fig:dialogue-wiki}, \ref{fig:dialogue-cnn}, \ref{fig:dialogue-princeton}. We generate 3 sets of corpus groups, and the experiment results are taken as the average. 
\add{
Figure \ref{fig:value-dist} shows the distribution of calculated In-Run Data Shapley. As we can see, most corpora's values are around 0, but certain corpora have significantly higher values than the other, which means their contributions are higher for the target validation corpus.}

\begin{figure}[h]
    \centering
    \setlength\intextsep{0pt}
    \includegraphics[width=\textwidth]{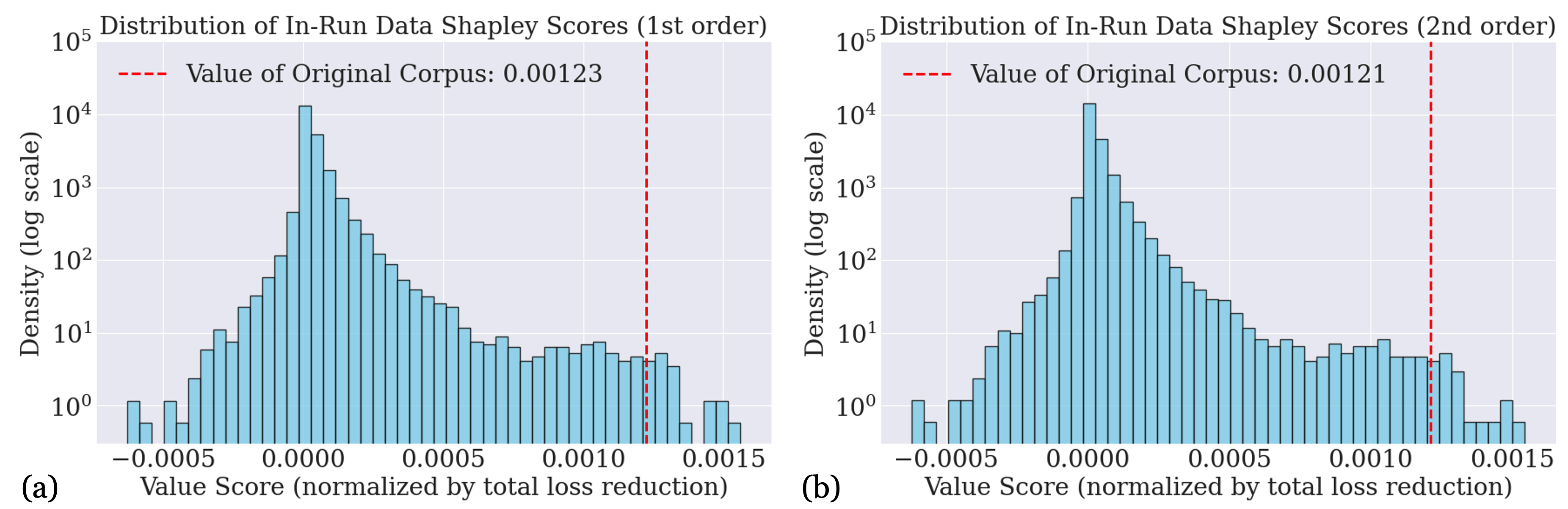}
    \caption{
    The distribution of calculated In-Run Data Shapley values. 
    }
    \label{fig:value-dist}
\end{figure}


\begin{figure}[h]
    \centering
    \setlength\intextsep{0pt}
    \resizebox{0.6\columnwidth}{!}{
    \begin{tcolorbox}[title=\textbf{Dialogue with GPT4 for generating corpus of varying levels of similarity (Original corpus from Wikipedia)}, fontupper=\small, left=2pt, right=2pt, bottom=1pt, top=1pt, lowvaluebox]
    \input{corpus/generate_partialexact}
    \end{tcolorbox}}    
    \caption{
    Our dialogue with GPT4 for generating corpus of varying levels of similarity for the original corpus from Wikipedia. 
    }
    \label{fig:dialogue-wiki}
\end{figure}

\begin{figure}[h]
    \centering
    \setlength\intextsep{0pt}
    \resizebox{0.6\columnwidth}{!}{
    \begin{tcolorbox}[title=\textbf{Example Dialogue with GPT4 for generating corpus of varying levels of similarity (Original corpus from CNN)}, fontupper=\small, left=2pt, right=2pt, bottom=1pt, top=1pt, lowvaluebox]
    \input{corpus/generate_obamacnn}
    \end{tcolorbox}}    
    \caption{
    Our dialogue with GPT4 for generating corpus of varying levels of similarity for the original corpus from a CNN news. 
    }
    \label{fig:dialogue-cnn}
\end{figure}

\begin{figure}[h]
    \centering
    \setlength\intextsep{0pt}
    \resizebox{0.6\columnwidth}{!}{
    \begin{tcolorbox}[title=\textbf{Example Dialogue with GPT4 for generating corpus of varying levels of similarity (Original corpus from princeton.edu)}, fontupper=\small, left=2pt, right=2pt, bottom=1pt, top=1pt, lowvaluebox]
    \input{corpus/generate_princeton}
    \end{tcolorbox}}    
    \caption{
    Our dialogue with GPT4 for generating corpus of varying levels of similarity for the original corpus from princeton.edu. 
    }
    \label{fig:dialogue-princeton}
\end{figure}

\clearpage

\subsubsection{Additional Experiment: Relevant Domain Detection}
\label{appendix:eval-domain-detection}

In this additional experiment, we evaluate the effectiveness of different data attribution techniques in identifying relevant domain-specific corpora within the Pile dataset. Specifically, we take a random batch of the validation corpus from sub-domains of the Pile dataset and evaluate the \emph{normalized recall@1000 scores}. This metric measures the proportion of same-domain corpora as the validation corpora among the 1000 highest-valued corpora, normalized by the global proportion of this domain within the Pile dataset.
As shown in Table \ref{tb:domain-detection}, BM25 performs well on this task in general. Both the first-order and second-order In-Run Data Shapley methods outperform the influence function in detecting relevant domain corpora. \textbf{Analysis:} The influence function only uses information from the final trained models, which can result in highly noisy value scores since the removal of one training data point might have a negligible effect on the final model performance. In contrast, In-Run Data Shapley effectively leverages information from all intermediate checkpoints during model training, providing a more comprehensive and accurate assessment of each data point's value. 
While BM25 performs well on most of the domains, there are still cases where In-Run Data Shapley achieves better detection rate, which implies that gradient information may be more useful in determining semantic similarity to some extent.

\begin{table}[h]
\centering
\resizebox{\columnwidth}{!}{\begin{tabular}{@{}ccccccccc@{}}
\toprule
\textbf{}                                & \textbf{Github}           & \textbf{EuroParl}          & \textbf{ArXiv}          & \textbf{PhilPapers}   & \textbf{FreeLaw}      & \textbf{HackerNews}     & \textbf{StackExchange} & \textbf{Wikipedia (en)}    \\ \midrule
\textbf{In-Run Data Shapley (1st order)} & 30.5                      & 12.7                       & 48.4                    & 0.6                   & 14.6                  & 2.7                     & 16.3                   & 14.5                       \\
\textbf{In-Run Data Shapley (2nd order)} & 29                        & 14.8                       & 50                      & 0.4                   & 16.9                  & 3.8                     & 20                     & 18.9                       \\
\textbf{Influence Function}              & 17.9                      & 4.1                        & 41.7                    & 0.5                   & 5.7                   & 2.6                     & 6.5                    & 5.5                        \\ \hdashline
\textbf{BM25}                            & 18.8                      & 6.3                        & 44.8                    & 1.1                   & 72.6                  & 9.9                     & 37.2                   & 18.6                       \\ \midrule
                                         & \textbf{PubMed Abstracts} & \textbf{USPTO Backgrounds} & \textbf{PubMed Central} & \textbf{Enron Emails} & \textbf{NIH ExPorter} & \textbf{DM Mathematics} & \textbf{Ubuntu IRC}    & \textbf{Gutenberg (PG-19)} \\ \midrule
\textbf{In-Run Data Shapley (1st order)} & 10.4                      & 10.5                       & 30                      & 0.8                   & 1.7                   & 23.5                    & 10.2                   & 8.7                        \\
\textbf{In-Run Data Shapley (2nd order)} & 16.6                      & 11.1                       & 31.1                    & 1                     & 1.9                   & 24.5                    & 11.5                   & 15.3                       \\
\textbf{Influence Function}              & 10.2                      & 7.4                        & 26.9                    & 0.2                   & 1.2                   & 10.1                    & 4.3                    & 8.8                        \\ \hdashline
\textbf{BM25}                            & 28.6                      & 23                         & 53.7                    & 1                     & 4.1                   & 34.6                    & 12.5                   & 25.9                       \\ \bottomrule
\end{tabular}

}
\caption{
Results of relevant domain corpus detection experiment, where the metric is the normalized recall@1000 (i.e., the proportion of the same domain corpora as the validation corpora among the 1000 corpora with the highest values, normalized by the global proportion of this domain among the Pile dataset).  
}
\label{tb:domain-detection}
\end{table}

\clearpage

\begin{table}[t]
\centering
    \begin{minipage}{0.49\textwidth}
        \begin{tcolorbox}[title=\textbf{Original Wikipedia Corpus}, fontupper=\small, left=2pt, right=2pt, bottom=1pt, top=1pt, validationbox]
        
        \end{tcolorbox}
    \end{minipage}
    \hfill
    \begin{minipage}{0.5\textwidth}
        \begin{tcolorbox}[title=\textbf{A corpus that is partially same}, fontupper=\small, left=2pt, right=2pt, bottom=1pt, top=1pt, highvaluebox]
        \#\#\# Instruction: Write a short story about Radhi and his band OAG as they recruit new members and blend traditional music with alternative sounds. Describe their creative challenges and successes leading up to their new releases. 
        \#\#\# Answer: 
        \end{tcolorbox}
    \end{minipage}
\end{table}

\begin{table}[t]
\centering
    \begin{minipage}{0.49\textwidth}
        \begin{tcolorbox}[title=\textbf{Original Wikipedia Corpus}, fontupper=\small, left=2pt, right=2pt, bottom=1pt, top=1pt, validationbox]
        
        \end{tcolorbox}
    \end{minipage}
    \hfill
    \begin{minipage}{0.5\textwidth}
        \begin{tcolorbox}[title=\textbf{Synthetic "Similar topic" Corpus}, fontupper=\small, left=2pt, right=2pt, bottom=1pt, top=1pt, highvaluebox]
        
        \end{tcolorbox}
    \end{minipage}
\end{table}




\clearpage

\subsection{Comparison with other data attribution baselines on small-scale models}
\label{appendix:rebuttal-exp-smallscale}

In this section, we evaluate In-Run Data Shapley using standard benchmarks for data attribution techniques, including mislabeled data detection and data selection. While data attribution scores can generally indicate data quality, they are not specifically optimized for these benchmarks. Furthermore, approximating the Shapley value has intrinsic merit beyond these standard tasks (e.g., for royalty sharing of AI-generated contents \citep{wang2024economic}). 
As such, this evaluation primarily serves as a \emph{sanity check} to confirm that In-Run Data Shapley can reasonably reflect data values. However, state-of-the-art performance is not expected since the method is not designed specifically for these tasks. 

We compare In-Run Data Shapley with several data attribution baselines, including Retraining-based Data Shapley \citep{ghorbani2019data}, KNN-Shapley \citep{jia2019efficient}, influence function \citep{koh2017understanding}, Trak \citep{park2023trak}, Empirical Influence Functions \citep{feldman2020neural}, and Datamodels \citep{ilyas2022datamodels}. We also compare with LESS \citep{xia2024less}, an extension of TracIN-CP tailored for instruction tuning data selection. 

As some baselines here require multiple model retrainings, we use a subset of 1,000 samples from CIFAR10 and ResNet18 architecture. We evaluate these techniques on two standard tasks for assessing data attribution methods: mislabeled data detection and data selection. We note that recent work \citep{wang2024rethinking} questions the suitability of data attribution for data selection, but we include this metric due to its prevalence in existing literature.

\paragraph{Experiment settings.}
We use ImageNet-pretrained ResNet18 as the architecture in the experiment. We use Adam with a learning rate 0.001, weight decay of 1e-4, and label smoothing of 0.1 over 50 epochs. The learning rate is reduced by a factor of 0.1 every 10 epochs. The batch size is set to 64. For retraining-based techniques (Retraining-based Data Shapley, Empirical Influence Functions, Datamodels), we estimate the corresponding attribution scores with 1000 model training runs. 
For LESS, we use the checkpoints from epochs 10, 20, 30, 40, 50 and set the projection dimension to 2048 as recommended in the original paper. 
For Trak, we set the projection dimension to be 2048. For KNN-Shapley, we set K=5 and use the features extracted from the last linear layer of ResNet18. We also evaluate its performance by averaging the results using the last $T$ checkpoints where $T \in \{1, 5, 15, 25\}$.

\paragraph{Mislabeled data detection.} The results in Table \ref{tb:mislabel-detection} show that KNN-Shapley achieves the highest performance in mislabeled data detection, likely due to its sensitivity to label changes. Retraining-based approaches (Retraining-based Data Shapley, Empirical Influence Functions, Datamodel) have the lowest performance, which can be attributed to Monte Carlo sample inefficiency and stochasticity during retraining, as noted in \citet{wang2023data}. Among techniques requiring only one training run, Trak does not outperform others, aligning with observations in its original paper \citep{park2023trak} that ensembles are often necessary for high performance. 
LESS achieves better results by incorporating more information from intermediate checkpoints. 
In-Run Data Shapley and influence function achieve comparable performance, outperforming all other techniques except KNN-Shapley. The relatively high performance of In-Run Data Shapley is likely due to its deterministic estimation algorithm. Furthermore, it makes use of the information from all intermediate checkpoints.

\paragraph{Data selection.} Table \ref{tb:data-selection} shows that KNN-Shapley again excels when the selection budget is small. Interestingly, most data attribution techniques perform worse than random baseline for small selection budgets, aligning with observations that selecting data points solely based on attribution scores can harm dataset diversity \citep{wang2024rethinking}. 
For larger selection budgets (60\%, 80\%), single-training-run techniques achieve similar performance levels.

\begin{table}[h]
\centering
\resizebox{\columnwidth}{!}{\begin{tabular}{@{}ccc@{}}
\toprule
\textbf{Retraining-based Data Shapley} \citep{ghorbani2019data} & \textbf{Empirical Influence Function} \citep{feldman2020neural}    & \textbf{Datamodels} \citep{ilyas2022datamodels}                      \\ 
0.582 (0.029)                          & 0.552 (0.017)                            & 0.52 (0.008)                              \\ \toprule
\textbf{KNN-Shapley} \citep{jia2019efficient, wang2023noteknn}                   & \textbf{Influence function} \citep{koh2017understanding} & \textbf{LESS} \citep{xia2024less}                              \\ 
\textbf{0.76 (0.018)}                           & 0.654 (0.054)                            & 0.55 (0.032)                             \\ \toprule
\textbf{1st Order In-Run Data Shapley (ours)} & \textbf{2nd Order In-Run Data Shapley (ours)} & \textbf{Trak (1 cpt)} \citep{park2023trak} \\ 
\textbf{0.678 (0.045)}                          & \textbf{0.680 (0.048)}                            & 0.511 (0.012)                             \\ \toprule
\textbf{Trak (5 cpts)} \citep{park2023trak} & \textbf{Trak (15 cpts)} \citep{park2023trak} & \textbf{Trak (25 cpts)} \citep{park2023trak} \\
0.542 (0.025)                          & 0.609 (0.028)                            & 0.617 (0.021)                             \\ \bottomrule
\end{tabular}}
\vspace{-2mm}
\caption{
AUROC scores of mislabeled data detection task with various data attribution techniques on CIFAR10 dataset. The higher the AUROC score is, the better the method is. The results are across three different training runs (the randomness comes from construction of corrupted datasets), where we show the standard deviation in (). Methods with top-3 performance are bolded. 
}
\label{tb:mislabel-detection}
\end{table}

\begin{table}[h]
\centering
\resizebox{\columnwidth}{!}{\begin{tabular}{@{}ccccc@{}}
\toprule
\textbf{}                              & \textbf{20\%}           & \textbf{40\%}           & \textbf{60\%}           & \textbf{80\%}           \\ \midrule
\textbf{Random}                        & \textbf{0.350 (0.010)}          & 0.461 (0.010)          & 0.525 (0.004)          & 0.559 (0.003)          \\
\textbf{Influence function} \citep{koh2017understanding}                           & 0.320 (0.033)          & 0.450 (0.028)          & 0.530 (0.015)          & \textbf{0.580 (0.004)}          \\
\textbf{Trak (1 cpt)} \citep{park2023trak}                          & 0.329 (0.021)          & 0.443 (0.030)          & 0.517 (0.016)          & 0.572 (0.009)          \\
\textbf{Trak (5 cpts)} \citep{park2023trak}                & 0.325 (0.022)          & 0.445 (0.031)          & 0.519 (0.018)          & 0.572 (0.009)          \\
\textbf{Trak (15 cpts)} \citep{park2023trak}               & 0.327 (0.028)          & 0.446 (0.035)          & 0.523 (0.018)          & 0.573 (0.010)          \\
\textbf{Trak (25 cpts)} \citep{park2023trak}               & 0.332 (0.025)          & 0.443 (0.035)          & 0.520 (0.018)          & 0.578 (0.015)          \\
\textbf{LESS} \citep{xia2024less}                          & 0.314 (0.012)          & 0.460 (0.015)          & \textbf{0.541 (0.006)}          & \textbf{0.582 (0.006)} \\
\textbf{Data Shapley} \citep{ghorbani2019data}                  & 0.317 (0.047)          & 0.468 (0.010)          & 0.527 (0.004)          & 0.570 (0.008)          \\
\textbf{Datamodels} \citep{ilyas2022datamodels}                    & 0.342 (0.004)          & 0.465 (0.004)          & 0.534 (0.010)          & 0.559 (0.005)          \\
\textbf{Empirical Influence Function} \citep{feldman2020neural}  & 0.342 (0.004)          & 0.466 (0.016)          & 0.530 (0.009)          & 0.568 (0.010)          \\
\textbf{KNN-Shapley} \citep{jia2019efficient}                   & \textbf{0.354 (0.017)} & \textbf{0.478 (0.007)} & 0.525 (0.015)          & 0.563 (0.005)          \\
\textbf{1st Order In-Run Data Shapley} & \textbf{0.344 (0.009)}          & \textbf{0.472 (0.003)}          & \textbf{0.541 (0.006)}          & \textbf{0.580 (0.004)}          \\
\textbf{2nd Order In-Run Data Shapley} & 0.342 (0.010)          & \textbf{0.473 (0.008)}          & \textbf{0.544 (0.005)} & \textbf{0.580 (0.004)}          \\ \bottomrule
\end{tabular}}
\vspace{-2mm}
\caption{
Test accuracies when training ResNet18 on high-value data points selected by various data attribution techniques. To be able to compare with techniques that require model retraining, for each training run we randomly sample a size-1000 subset of CIFAR10 dataset (with 10\% data points being mislabeled). The results are across three different training runs (the randomness comes from the construction of corrupted datasets), where we show the standard deviation in (). Methods with top-3 performance are bolded. 
}
\label{tb:data-selection}
\end{table}

\end{document}